\newcommand{\Exp}{\mathbb E}
\DeclareMathOperator*{\argmin}{argmin}
\newcommand{\R}{\mathbb{R}}
\newcommand{\RNN}{\R_{0+}}
\newcommand{\R}{\mathbb{R}}
\newcommand{\RNN}{\R_{\mathsmaller{0+}}}
\DeclareMathOperator*{\Expect}{\mathbb{E}}
\DeclareMathOperator*{\Prob}{\mathbb{P}}
\newcommand{\x}{\vec{x}}
\newcommand{\lv}{\mathcal{S}}
\newcommand{\epsv}{\bm{\varepsilon}}
\newcommand{\etav}{\bm{\eta}}
\newcommand{\wv}{\bm{w}}
\newcommand{\distributed}{\thicksim}
\newcommand{\X}{\mathcal{X}}
\newcommand{\Y}{\mathcal{Y}}
\newcommand{\HC}{\mathcal{H}}
\newcommand{\Risk}{\mathrm{R}}
\newcommand{\ERisk}{\hat{\Risk}}
\newcommand{\Malfare}{\vphantom{\mathrm{W}}\raisebox{1.53ex}{\rotatebox{180}{\ensuremath{\mathrm{W}}}}}
\newcommand{\ProbDist}{\mathcal{D}}
\newcommand{\vsigma}{\bm{\sigma}}
\newcommand{\frange}{r}
\newcommand{\loss}{\ell}
\DeclareFontShape{OMX}{cmex}{m}{b}{<-> cmexb10}{}
\providecommand{\LandauO}{\bm{\mathrm{O}}} 
\providecommand{\LandauTheta}{\bm{\Uptheta}}
\providecommand{\LandauOmega}{\bm{\Upomega}}
\newcommand{\cyrus}[1]{\textcolor{green!50!black}{Cyrus: #1}}
\newtheorem{definition}{Definition}
\newtheorem{lemma}[definition]{Lemma}
\newtheorem{theorem}[definition]{Theorem}
\newcommand{\wrt}{w.r.t.}
\newcommand{\iid}{i.i.d.}
\newcommand{\whp}{w.h.p.}
\newcommand{\NGroups}{g}
\DeclareMathOperator{\VC}{VC}
\providecommand{\leftsquigarrow}{%
  \mathrel{\mathpalette\reflect@squig\relax}%
}
\newcommand{\reflect@squig}[2]{%
  \reflectbox{$\m@th#1\rightsquigarrow$}%
}
\newcommand{\Rade}{\textgoth{R}}
\newcommand{\ERade}{\hat{\Rade}}
\tikzset{
  declare function={
      sgn(\x) = (and(\x<0, 1) * -1) + (and(\x>0, 1) * 1);
      gm2(\va,\vb) = (sqrt(\va * \vb ));
      pm2(\va,\vb,\p) = ((\va ^ \p + \vb ^ \p) / 2) ^ (1 / \p);
      pmw2(\va,\vb,\wa,\wb,\p) = (\wa * \va ^ \p + \wb * \vb ^ \p) ^ (1 / \p);
      amean3(\va,\vb,\vc) = ((\va + \vb + \vc) / 3);
      as3(\va,\vb,\vc,\pp) = ((\va ^ \pp + \vb ^ \pp + \vc ^ \pp) / 3);
      pm3(\va,\vb,\vc,\pp) = as3(\va,\vb,\vc,\pp) ^ (1 / \pp);
      asw3(\va,\vb,\vc,\wa,\wb,\wc,\pp) = (\wa * \va ^ \pp + \wb * \vb ^ \pp + \wc * \vc ^ \pp);
      pmw3(\va,\vb,\vc,\wa,\wb,\wc,\pp) = asw3(\va,\vb,\vc,\wa,\wb,\wc,\pp) ^ (1 / \pp);
    },
}
\edef\PSum{\pgfmathresult}
\edef\PSum{\pgfmathresult}
\edef\ALim{\pgfmathresult}
\edef\P{\pgfmathresult}
  \edeef\PSum{\pgfmathresult}
\xdef\PSum{0}
\xdef\ALen{dim(#1)}
\xdef\P{array(#1,0)}
  \xdef\PSum{\PSum+array(#1,\i)^\P}
\xdef\ALen{dim(#1)}
\xdef\P{array(#1,0)}
\xdef\PSum{psum(#1)}
\xdef\Result{(\PSum/(\ALen-1))^(1/p)}
\newif\iflongversion
\def\expandafter\normalsize\expandafter{%
    \normalsize%
    \setlength\abovedisplayskip{4pt}%
    \setlength\belowdisplayskip{2pt}%
    \setlength\abovedisplayshortskip{-8pt}%
    \setlength\belowdisplayshortskip{2pt}%
}
\begin{document}

\twocolumn[

\aistatstitle{To Pool or Not To Pool: Analyzing the Regularizing Effects of Group-Fair Training on Shared Models}

\aistatsauthor{ Cyrus Cousins \And I. Elizabeth Kumar \And  Suresh Venkatasubramanian }

\aistatsaddress{ University of Massachusetts Amherst \And  Brown University \And Brown University } ]
\begin{abstract}
In fair machine learning, one source of
performance disparities between groups
is overfitting to groups with relatively few training samples.
We derive group-specific bounds on the generalization
error
of
welfare-centric fair machine learning 
that benefit from
the larger sample size of the majority group. We do this by considering
group-specific Rademacher averages over a restricted hypothesis class,
which contains the family of models likely to perform well with respect to a
fair learning objective (e.g., a power-mean). Our simulations demonstrate
these bounds improve over a na\"ive method, as expected by theory, with particularly significant improvement for smaller group sizes.
\end{abstract}


\section{INTRODUCTION}
It is well-known that learned models can have performance or outcome disparities on underrepresented or disadvantaged groups in a distribution \citep{buolamwini2018gender, obermeyer2019dissecting}.
Research suggests that these disparities are the result of a complex interaction between the training procedure, model class, and training data \citep{chen_why}.

Group-based welfare-centric machine learning attempts to mitigate disparities by 
optimizing 
\emph{aggregations of per-group risk values}, rather than average overall loss.
In other words, the task is to approximate $\argmin_{h \in \HC} \Malfare\bigl( \Risk(h, \ProbDist_1), \dots, \Risk(h, \ProbDist_{\NGroups}) \bigr)$ for some
\emph{malfare function} $\Malfare(\cdot)$, where
$\Risk(h, \ProbDist_{i})$ is the risk (average loss) of
group 
$i$
under
model
$h$.
Such objectives 
produce models that fairly compromise
among groups in various ways.
The
malfare function determines the fairness concept; for example, $\wv$-weighted risk minimization is equivalent to optimizing \emph{utilitarian malfare} $\Malfare_{1}(\lv; \wv) = \lv \cdot \wv$, and
taking $\Malfare(\cdot)$ to be the \emph{maximum
} produces the \emph{minimax-optimal} $h^{*}$
,
a.k.a., the \emph{egalitarian} or \emph{Rawlsian} fair model.

However, 
training with \emph{empirical risk} is susceptible to ``overfitting to fairness,’’ wherein models
overfit
small or high-risk \emph{minority groups}. 
\Citet{cousins2021axiomatic,cousins2022uncertainty,cousins2023revisiting}  shows
that
generalization error (
overfitting) of the \emph{overall objective} decreases
with \emph{each group's} sample size,
but
the current SOTA generalization 
bounds 
\emph{for group $i$}
depend only on \emph{group $i$}'s sample size. 
We address this discrepancy;
in particular, we show that
in fair learning,
each group $i$
effectively learns
over a ``restricted 
class’’ 
of models that are reasonably likely 
given the training data \emph{for
all
groups} $j \neq i$, thus we bound their generalization error 
via Rademacher averages \emph{of the restricted class}, improving over existing bounds based on the original hypothesis class. 


We begin by introducing notation and preliminary concepts (\cref{sec:bg:prelim}) and situating our approach with respect to existing literature (\cref{sec:bg:related}). We derive group-specific bounds on the generalization 
error
of jointly trained models, which
benefit from the larger sample size of the majority group (\cref{sec:bounds}).
These 
techniques also translate to improved bounds on the generalization error of the malfare objective itself.
Additionally, we experimentally verify our methods
on synthetic linear and logistic regression tasks, finding 
that our bounds better describe the overfitting behavior of fair-learning methods than 
SOTA analysis (\cref{sec:exp}).
Our analysis allows us to resolve key real-world problems, such as when multiple groups benefit from pooling data to train a single (shared) model. 
All proofs are relegated to \cref{appx:sec:proofs}.


\section{BACKGROUND}
\label{sec:bg}

We now introduce notation and preliminary concepts, followed by a brief review of related work.

\subsection{Preliminaries}
\label{sec:bg:prelim}

We assume a standard supervised learning setting.
Given domain label space $\Y$, domain $\X$, and codomain $\Y'$, we assume a \emph{hypothesis class} $\HC \subseteq \X \to \Y'$ and \emph{loss function} $\loss: \Y' \times \Y \to \R$.
Now, suppose a sample $(\bm{x}, \bm{y}) = \bm{z} \in (\X \times \Y)^{m}$ or instance distribution $\ProbDist$ over $\X \times \Y$.
We
define the \emph{empirical risk} of 
hypothesis $h$ as
\[
\ERisk(h, \bm{z}) \doteq \frac{1}{m} \sum_{i=1}^{m} \loss(h(\bm{x}_{i}), \bm{y}_{i})
\enspace,
\]
and the true risk over the distribution $\ProbDist$ as
\[
\Risk(h, \ProbDist) \doteq \Expect_{(x,y) \distributed \ProbDist} [ \loss(h(x), y) ]
\enspace.
\]

A standard supervised learning task then 
identifies the \emph{empirical risk minimizer}
\[
\hat{h} \doteq \argmin_{h \in \HC} \ERisk(h, \bm{z})
\]
as a proxy for the \emph{true risk minimizer}
\[
h^{*} \doteq \argmin_{h \in \HC} \Risk(h, \ProbDist)
\enspace.
\]

This framework encapsulates simple supervised settings where $\Y = \Y'$, such as 
least-squares regression or hard binary classification, but it also
contains
more sophisticated supervised learning settings, like probabilistic classification or conditional density estimation. 
\iflongversion
\lknote{Is the rest of this para  necessary?}\ccnote{Not necessary, but helpful I think. but I inevitably get questions of the form “can I do X in this framework?” or worse yet, criticisms that the framework is not sufficiently general to handle X, during the review process. If we really need the space, we can comment all or part of this out, and ideally put it back into an extended version.}
\sverror{I'm not sold on why this needs to be here. But to C's point, we might consider moving this to the discussion at the end on ``what else can we do with this'' if space permits.}
We also model more exotic settings, such as
\emph{conditional density estimation}, where the prediction space $\Y’$ 
is 
distributions over $\Y$. 
Furthermore, taking $\Y = \X$ 
easily encodes unsupervised objectives, such as
(relatives of)
$k$-means clustering, where $h(\vec{x})$ produces the nearest cluster center, and $\loss(h(x), x) = \norm{h(x) - x}_{p}^{q}$ for some $p,q \in \R$. 
Finally, observe that by concatenating \emph{group identity} $i$ onto $\X$, we can also allow for \emph{group-specific} loss functions and even prediction functions, i.e., for $x = (x', i)$, we can take $h(x) = (h_{i}(x'), i)$ and $\loss(h(x', i), y) = \loss_{i}( h_{i}(x'), y )$ for group-specific $h_{1:\NGroups}(\cdot)$ and $\loss_{1:\NGroups}(\cdot, \cdot)$.
\fi

\paragraph{Group-Fair Learning}
This work considers \emph{group-fair learning}, in which we assume not one instance distribution $\ProbDist$, but rather $\NGroups$ \emph{per-group} instance distributions $\ProbDist_{1:\NGroups}$, and per-group samples $\bm{z}_{i} \distributed \ProbDist_{i}^{\smash{\bm{m}_{i}}}$ where $\bm{m}_{i}$ is the sample size for group $i$.
The distribution $\ProbDist_{i}$ encapsulates the situations encountered by members of each group $i$, which may vary in 
$\X$ (situations encountered by each group), as well as their conditional labels $\Y | \X$ (responses or labels to a given situation).

To treat groups fairly, we consider objectives that consider the risk of all groups.
In particular, we assume a cardinal \emph{malfare function} $\Malfare(\cdot): \R^{\NGroups} \to \R$, and
we then seek the \emph{empirical malfare minimizer}
\begin{align*}
\hat{h} &\doteq \argmin_{h \in \HC} \Malfare \bigl(i \mapsto \ERisk(h, 
\bm{z}_{i}
)
\bigr) \\
 &= \argmin_{h \in \HC} \Malfare \bigl( \ERisk(h, \bm{z}_{1}), \ERisk(h, \bm{z}_{2}), \dots, \ERisk(h, \bm{z}_{\NGroups}) \bigr)
\end{align*}
as a proxy for the \emph{true malfare minimizer}
\[
h^{*} \doteq \argmin_{h \in \HC} \Malfare \bigl(i \mapsto \Risk(h, \ProbDist_{i}) \bigr)
\enspace.
\]

\paragraph{On Malfare Functions}

The choice of malfare function $\Malfare(\cdot)$ directly encodes how one wishes to make tradeoffs between various groups at various levels of risk.
The malfare function is thus a fundamental
fair-learning
hyperparameter that must be selected to achieve a modeler’s desired fairness properties, i.e., choosing a malfare function is equivalent to choosing a fairness concept.

Two popular choices are the \emph{utilitarian malfare} (weighted average), which generally weights the risk of each group proportional to their size, and the \emph{egalitarian malfare}, which seeks to lift up the most disadvantaged groups by minimizing the maximum risk.
These are in some sense two extremes of a spectrum: utilitarian malfare
only weights
groups, and does not distinguish between high-risk and low-risk groups (no equitable redistribution), whereas egalitarian malfare considers only the risk of each group, offering preferential treatment to those most in need (no consideration of non-minimal groups).
It is known that both of the above malfare functions belong to a general class of such functions.
\begin{definition}[Power-Mean Malfare]
Suppose some 
$p \geq 1$, positive \emph{probability measure} $\wv \in \triangle_{\NGroups}$, and nonnegative \emph{
risk vector} $\lv \in \RNN^{\NGroups}$.
We define the \emph{weighted power-mean} as
\begin{equation}
\label{eq:wpmean}
\!\!\!
\Malfare_{p}(\lv; \wv) \doteq 
\vphantom{\sum}
{\sqrt[p]{
  \vphantom{\sum^{,}_{\cdot}}\smash{\sum_{i=1}^{\NGroups}}
  \wv_{i} \lv_{i}^{p} }}
  \enspace, \;\,
\Malfare_{\infty}(\lv; \wv) \doteq 
  \, \smash{\max_{\mathclap{i \in 1, \dots, \NGroups}}} \, \lv_{i}
\enspace. \!
\end{equation}

\end{definition}

\iflongversion
More generally, we define two popular \emph{classes} of malfare function.

\iftrue
\begin{definition}[Gini Malfare]
Suppose a \emph{descending sequence} of \emph{Gini weights} $\wv^{\downarrow} \in \triangle_{\NGroups}$, \emph{
risk vector} $\lv \in \R^{\NGroups}$, and let $\lv^{\downarrow}$ denote $\lv$ in descending 
order.
The Gini malfare 
is then
\begin{equation}
\label{eq:gini}
\Malfare_{\wv^{\downarrow}}(\lv) \doteq \vphantom{\sum}\smash{\sum_{i=1}^{\NGroups}} \wv^{\downarrow}_{i} \lv^{\downarrow}_{i} = \wv^{\downarrow} \cdot \lv^{\downarrow}
\enspace.
\end{equation}
\end{definition}

\fi
\fi 

Both
utilitarian and egalitarian malfare arise as 
power-mean special-cases 
$p=1$ and 
$p=\infty$, respectively.

\iflongversion
They also arise in the Gini family, for \scalebox{0.9}[0.95]{$\wv^{\uparrow} \! = \! \langle \mathsmaller{\frac{1}{\NGroups}}, \dots, \mathsmaller{\frac{1}{\NGroups}} \rangle$} and \scalebox{0.9}[0.95]{$\wv^{\uparrow} \! = \! \langle 1, 0, \dots, 0 \rangle$}, respectively.
\fi

The power-mean class is axiomatically justified \citep{debreu1959topological,gorman1968structure,cousins2021axiomatic,cousins2023revisiting}, which motivates its use in a variety of learning and allocation settings \citep{barman2020tight,cousins2022faire3,viswanathan2023general,cousins2023dividing,cousins2023good
}.
\iflongversion
Similarly, a slightly different set of axioms 
yields the Gini class \citep{weymark1981generalized,gajdos2005multidimensional}.
\fi
Fairness and robustness are closely linked, and \citet{cousins2023algorithms} also motivates power-means, as well as Gini malfare, and other malfare classes, from the perspective of robustness.
This work is neutral to the choice of malfare function; 
we only seek to show that our methods may be applied to any malfare concept that meets certain broad criteria.

We generally assume 
that $\Malfare(\cdot)$ is \emph{monotonic}, i.e., that increasing any group's risk
never decreases
malfare.
Furthermore, 
\emph{convex} malfare functions are convenient for optimization, and in \cref{sec:bounds:linear} we utilize this property to efficiently bound Rademacher averages. 
Finally, several of our bounds
have
algebraically convenient corollaries if we assume 
\emph{Lipschitz continuity}, i.e., small changes to risk
yield
small changes to malfare.
%
The power-mean malfare family, as well as other 
malfare classes, such as
the Gini class
\citep{weymark1981generalized,gajdos2005multidimensional} or
the \emph{utilitarian-maximin}
class
\citep{deschamps1978leximin,bossert2020axiomatization,schneider2020utilitarian},
each arise uniquely from 
their own
sets of
axioms.
Each assume 
some type of \emph{monotonicity},
\emph{transfer principles}, such as the Pigou-Dalton \citep{pigou1912wealth,dalton1920measurement}, which incentivize \emph{equitable redistribution} of harm 
and give rise to convexity, as well as 
some concept of 
\emph{continuity}, which coupled with functional analysis of the resultant class, give rise to Lipschitz continuity.
Our criteria for malfare functions are thus quite reasonable.

\paragraph{Statistical Background}

The Rademacher average is a key statistical tool used to bound the \emph{supremum deviation} of empirical means from their expectations \citep{bartlett2002rademacher}.
Denote the \emph{loss class} $\loss \circ \HC \doteq \left\{ (x, y) \mapsto \loss(h(x), y) \, \middle| \, h \in \HC \right\}$, and define Rademacher averages as follows.
\begin{definition}[Rademacher Averages]
\label{def:rade}
Let 
$\vsigma_{1:m}$ be a
vector
of $m$
\iid\ $\mathrm{Unif}(\pm 1)$ random variables. 
The \emph{empirical Rademacher average} is then 
\[
\ERade_{m} (\ell \circ \mathcal{H}, \bm{z} ) \doteq \Expect_{\vsigma} \left[ \sup_{h \in \HC} \frac{1}{m} \sum_{i=1}^m \vsigma_i \ell(h(\bm{x}_{i}), \bm{y}_{i}) \right]
\enspace,
\]
i.e., the maximum correlation of any $h \in \HC$ with noise on a sample $\bm{z} \in (\X \times \Y)^{m}$,
and the \emph{Rademacher average} is 
its expectation over
\iid\ 
samples
from $\ProbDist$, i.e.,
\[
\Rade_{m} (\ell \circ \mathcal{H}, \ProbDist ) \doteq \Expect_{\bm{z} \distributed \ProbDist^{m}} \bigl[ \, \ERade_{m}(\loss \circ \HC, \bm{z}) \, \bigr]
\enspace.
\]
\end{definition}

Assuming \emph{bounded loss range} $\frange$, 
let
$\epsv_{i} \doteq \frange\sqrt{\frac{\ln \frac{1}{\delta}}{2\bm{m}_{i}}}$ and
$\hat{\etav}_{i} \doteq 2\ERade_{\bm{m}_{i}}(\loss \circ \HC, \bm{z}_{i}) + 2\epsv_{i}$.
For any failure probability $\delta$
,
$h \in \HC$, and group $i$, 
sampling error is bounded as
\begin{equation}
\label{eq:hoeff-bound}
\Prob_{\!\bm{z}_{i} \distributed \ProbDist_{i}^{\smash{\bm{m}_{i}}}\!} \left( \, \abs{ 
    \ERisk(h, \bm{z}_{i}) - \Risk(h, \ProbDist_{i}) 
    }
    > \epsv_{i} \right) < 2\delta 
\enspace.
\end{equation}
Moreover, considering all $h \in \HC$ simultaneously, 
we have for each group $i$ that
{%
\small%
\begin{equation}
\label{eq:rade-textbook}
\Prob_{\!\!\bm{z}_{i} \distributed \ProbDist_{i}^{\smash{\bm{m}_{i}}}\!} \! \left( \! \begin{matrix} 2\Rade_{m}(\loss \! \circ \! \HC, \ProbDist_{i}) > 2\ERade_{m}(\loss \! \circ \! \HC,\bm{z}_{i}) \! + \! \epsv_{i}  \\ \bigvee \sup\limits_{h \in \HC} \abs{ 
    \ERisk(h, \bm{z}_{i}) - \Risk(h, \ProbDist_{i})
    }
    > \hat{\etav}_{i}
    \end{matrix} \! \right) < 3\delta 
\!\enspace.
\end{equation}}%
Equations~\eqref{eq:hoeff-bound} and \eqref{eq:rade-textbook} are used throughout for various
hypotheses and hypothesis classes,
both in the above forms, and as 1-tailed variants.
These ``textbook results'' are now standard in learning theory\footnote{Constants vary between sources, depending on 
definitions and derivations.
Our probabilistic statements 
use 2-tailed Hoeffding \citeyearpar{hoeffding1963probability} 
bounds 
and 3 applications of McDiarmid's \citeyearpar{mcdiarmid1989method} inequality,
with
optimal
bounded differences, for 
Rademacher averages with no absolute value
inside the supremum. 
} \citep{shalev2014understanding,mitzenmacher2017probability}.

The quantity $\abs{\smash{\ERisk(h, \bm{z}_{i})} - \Risk(h, \ProbDist_{i})}$ of \eqref{eq:hoeff-bound} is the \emph{absolute deviation} between the empirical risk
and the expected risk for
each individual
$h \in \HC$,
and
it bounds
the \emph{estimation error} (i.e., error due to sampling) of any such function. 
The quantity $\sup_{h \in \HC} \abs{ 
\smash{\ERisk(h, \bm{z}_{i})} - \Risk(h, \ProbDist_{i}) }$ of \eqref{eq:rade-textbook} is known as the \emph{supremum deviation} over
the loss class $\loss \circ \HC$, and
it bounds
the \emph{generalization error}, both due to sampling error
and due to
selection bias
(training), of the learned $\hat{h}$.

From 
\eqref{eq:rade-textbook} and a union-bound over groups, following \citet{cousins2021axiomatic,cousins2022uncertainty,cousins2023revisiting},
we probabilistically bound each group's generalization error (training-true
risk gap) as
\begin{equation}
\label{eq:group-gen}
\Prob_{\bm{z}_{1:\NGroups}} \! \left( \forall i\!:\, 
    \abs{ 
    \ERisk(\hat{h}, \bm{z}_{i}) - \Risk(\hat{h}, \ProbDist_{i})
    }
    \leq \hat{\etav}_{i} \right) \geq 1 - 3\NGroups\delta
\enspace.
\end{equation}
Moreover, if $\Malfare(\lv)$ is monotonically increasing in $\lv$, 
then the malfare generalization
error obeys
{\small
\begin{equation}
\label{eq:m-gen}
\!\!\Prob_{\mathclap{\bm{z}_{1:\NGroups}}} \left( \begin{matrix}{\Malfare\bigl( i \mapsto \ERisk(\hat{h}, \bm{z}_{i}) \! - \! \hat{\etav}_{i} \bigr)} & \!\! {\text{(Empcl.\ LB)}} \\ \leq  {\Malfare\bigl( i \mapsto \Risk(\hat{h}, \ProbDist_{i}) \bigr)} & \!\! {\mathclap{\text{(True Malfare)}}} \ \  \\ \leq \! {\Malfare\bigl( i \mapsto \ERisk(\hat{h}, \bm{z}_{i}) \! + \! \hat{\etav}_{i} \bigr)} \!\!\!\!\! & \!\! {\text{(Empcl.\ UB)}} \end{matrix} \, \right) \! \geq \! 1 - 3\NGroups\delta
\enspace,
\end{equation}%
}%
i.e., the true malfare of
$\hat{h}$ is sandwiched by upper and lower bounds in terms of empirical malfare.
Finally, using also a union bound over \eqref{eq:hoeff-bound},
the
gap
between the true risk of the empirical malfare minimizer $\hat{h}$ and the true malfare minimizer $h^{*}$
is
\begin{equation}
\label{eq:m-subopt}
\hspace{-0.4cm}\Prob_{\bm{z}_{1:\NGroups}} \! \left( \begin{matrix} \Malfare\bigl( i \mapsto \Risk(\hat{h}, \ProbDist_{i}) - \hat{\etav}_{i} \bigr) \\ \leq \Malfare\bigl( i \mapsto \Risk(h^{*}\!, \ProbDist_{i}) + \epsv_{i} \bigr) \end{matrix} \right) \geq 1 - 5\NGroups\delta
\enspace.
\hspace{-0.25cm}
\end{equation}
\iflongversion
\cyrus{Could be $1 - 3\NGroups\delta$; only need 1 tail of each bound type. Tails.}
Note that $\epsv$ and $\hat{\etav}$ are inside of the malfare operator $\Malfare(\cdot)$ in \eqref{eq:m-gen} and \eqref{eq:m-subopt}, but they can often be extracted via Lipschitz continuity; in particular for $p \geq 1$ power-means, $\Malfare_{p}(\lv + \lv'; \wv) \leq \Malfare_{p}(\lv; \wv) + \Malfare_{p}(\lv'; \wv)$, which provides loose but convenient bounds. \lknote{What does 'extract' mean here?}
\fi

\subsection{Related work}
\label{sec:bg:related}


This work
follows others in group-based welfare-centric fair machine learning.
This 
often takes the form of \emph{Rawlsian} or \emph{egalitarian} learning, also known as \emph{minimax fair learning}, wherein $\Malfare(\cdot)$ is the maximum function, and the goal is to minimize the maximum (over groups) average loss \citep{diana2021minimax,shekhar2021adaptive,abernethy2022active,martinez2020minimax,lahoti2020fairness,cortes2020agnostic,shekhar2021adaptive,
dong2022decentering}%
\iftrue%
, which is 
a form of \emph{distributionally robust optimization} \citep{hu2018does,oren2019distributionally,sagawa2019distributionally}%
\fi%
. Most such works only consider performance over the training set, but the Seldonian learner framework \citep{thomas2019preventing} explicitly
requires trained models be probably approximately optimal \wrt\ some constrained nonlinear objective. 
Similarly, the fair-PAC learning framework \citep{cousins2021axiomatic,cousins2023revisiting}
considers
malfare minimization with power-mean objectives.

Due to the nonlinearity of $\Malfare(\cdot)$, existing work bounds generalization errror \emph{separately} for each group $j$, and applies assumed Lipschitz or H\"older continuity properties of $\Malfare(\cdot)$ to bound the overall objective \citep{cousins2021axiomatic,cousins2022uncertainty,cousins2023revisiting}.
In this work, we show sharper bounds on the generalization error of
malfare
objectives,
but we also seek
to bound
each group's
generalization error.

\paragraph{Multitask learning}
There is
overlap between group fair learning (GFL) and multitask learning (MTL).
This work
shows
that GFL 
reduces generalization error for all groups (particularly smaller groups), which is essentially the motivation for MTL. 
In both cases, we have
$\NGroups$ distributions 
(per-group in GFL, per-task in MTL) and some objective that considers each distribution through $\Risk(h, \ProbDist_{i})$.
To our knowledge, there is no published work in multitask learning
on objectives that treat tasks nonlinearly, i.e., the objective is always \citep{caruana1997multitask,zhang2018overview,zhang2021survey} 
\[
\hat{h} \doteq \argmin_{h \in \HC} \sum_{i=1}^{\NGroups} \frac{1}{\bm{m}_i} \sum_{j=1}^{\bm{m}_i} \loss_{i}(h(\bm{x}_{i,j}), \bm{y}_{i,j}) \enspace. 
\]

Existing 
MTL analysis bounds 
generalization error 
by considering all data at once \citep{zhang2020generalization,zhang2021survey};
assuming $m$ samples each for $\NGroups$ groups,
VC dimension, Rademacher averages, 
etc.\ 
bound total estimation error as
$O  \sqrt{{\ln \mathsmaller{\frac{1}{\delta}}}/{m\NGroups}}$.
Such methods do not apply in our setting, as we seek \emph{per-group generalization bounds} and treat \emph{nonlinear objectives}, thus ultimately we do not expect bounds of this order.

\paragraph{To pool or not to pool}
Some work directly addresses the tradeoff between training pooled versus separate models for groups.
\citet{dwork2018decoupled} define the \emph{cost-of-coupling} as 
\[
\max_{\ProbDist} \left( \min_{h \in \HC}\sum_{i=1}^{\NGroups}\Risk(h, \ProbDist_{i}) -
\sum_{i=1}^{\NGroups} \min_{h \in \HC} \Risk(h, \ProbDist_{i})
    \right) \enspace,
\]
i.e., worst-case difference between the sum risk of the optimal shared model $\hat{h}$, vs.\ sum risk of optimal per-group models $\hat{h}_{1:\NGroups}$.
When this quantity is positive,
training with pooled data may require tradeoffs in accuracy across groups.
They then introduce \emph{transfer learning} methods to train per-group classifiers $\hat{h}_{1:\NGroups}$ while leveraging available data where appropriate.
Similarly to our work, this results in improved VC-theoretic groupwise bounds on generalization error than fully separated training.
However, the goal of our learning framework is still to learn a joint model, avoiding thorny questions of disparate treatment.
\citet{benefit_of_splitting} also examine the tradeoff, where the metric of interest or \emph{benefit of splitting} is based on an egalitarian notion of fairness.
They
largely focus on 
the infinite-samples or known-distributions settings;
however, 
they provide VC-theoretic generalization bounds on the benefit of splitting.
These are necessarily worst-case (over possible distributions), and specific to binary classification, whereas we provide data-dependent Rademacher average bounds applicable to a broad range of supervised and unsupervised settings.


\section{BOUNDING GENERALIZATION ERROR 
IN FAIR TRAINING
}
\label{sec:bounds}

The generalization error analysis of \cref{sec:bg:prelim} does not take into account the fact
that learning is not equally likely to produce any $h \in \HC$. In this section, we present a sharper analysis that reflects this, both in per-group generalization error bounds, and in the overall generalization error of a malfare objective.

Our approach is to take the core idea of localization \citep{bartlett2005local} --- restricting the function class of interest to a subset that with high probability contains the function that will be learned --- and generalize it to apply in multi-group fair learning settings.
In \Cref{sec:bounds:theoretical} we argue that, for each group $i$, with high probability, the learned function $\hat{h}$ belongs to some $\HC_{i}^{*} \subseteq \HC$. 
We 
bound generalization error over $\HC_{i}^{*}$, 
with $\ERade_{\bm{m}_{i}}(\HC_{i}^{*}, \bm{z}_{i})$, 
where often $\ERade_{\bm{m}_{i}}(\HC_{i}^{*}, \bm{z}_{i}) \ll \ERade_{\bm{m}_{i}}(\HC, \bm{z}_{i})$. The analysis depends on the group index $i$, since while analyzing group $i$, we can treat the training samples $\bm{z}_{j}$ as constant for each $j \neq i$, but the class $\HC_{i}^{*}$ must not depend on $\bm{z}_{i}$ for vital technical reasons (see proof of \cref{thm:bounds-empirical}; 
we require $\HC_{i}^{*}$ to be established independently from $\bm{z}_{i}$ in $\ERade_{\bm{m}_{i}}(\HC_{i}^{*}, \bm{z}_{i})$). We thus establish a theoretical hypothesis class that directly depends on the training sample for each $j \neq i$, but depends on the distribution for group $i$ instead of its training sample.


Unfortunately, $\HC_{i}^{*}$ is a theoretical object (not actually known, as it depends on $\ProbDist_{i}$). Thus, we have little recourse but to relax to dependence on purely empirical quantities. We thus establish in \Cref{sec:bounds:empirical} an empirical class $\hat{\HC}_{i}$, which depends on $\bm{z}_{i}$ instead of $\ProbDist_{i}$. At first glance this seems to violate core statistical precepts, but through careful construction, we show that $\hat{\HC}_{i}$ acts merely as a probabilistic proxy for $\HC_{i}^{*}$.

Finally, we must actually \emph{estimate} the relevant Rademacher bounds. In \Cref{sec:bounds:linear} we illustrate how this can be done for linear hypothesis classes using Monte-Carlo Rademacher averaging.

\subsection{Theoretical Restricted 
Classes}
\label{sec:bounds:theoretical}


When
bounding
the generalization error of group $i$, we want to construct a restricted hypothesis class leveraging information given by the remaining group samples, in particular their empirical risks.
However, 
we can't directly use the group $i$ sample $\bm{z}_{i}$, so instead we bound empirical risk $\ERisk(h, \bm{z}_{i})$ 
in terms of $\Risk(h, \ProbDist_{i})$.
Intuitively, we want this restricted class to be the set of all $h \in \HC$ that could reasonably be the function 
we learn from \emph{all data} (the empirical malfare minimizer $\hat{h}$), where the restricted class is constructed after observing only the data $\bm{z}_{j}$ for all groups $j \neq i$.

Similar techniques are common in learning theory and the study of localization, where a \emph{theoretical class} 
is constructed based on the (unknown) distribution(s), and subsequently an \emph{empirical class} that is with high probability a superset which can be built from the data.
Our approach, however, is unique in that it is in some sense \emph{half-empirical}, as the theoretical class depends on the distribution $\ProbDist_{i}$ of one group, and the samples $\bm{z}_{j}$ from all groups $j \neq i$.
We do this instead of 
constructing a ``fully theoretical'' class using only the distributions $\ProbDist_{1:\NGroups}$, as well as an empirical variant based on all training samples $\bm{z}_{1:\NGroups}$, which would be substantially larger.
\iflongversion
However, a necessary consequence of our strategy is that we must assume samples are \emph{independent between groups}, whereas the looser strategy would not require this assumption. \lknote{i'm not sure what is meant / implicated by this assumption?}\ccnote{We need to assume that, say, $\bm{z}_{1,1}$ and $\bm{z}_{2,1}$ are independent.
This is in addition to the standard assumption that $\bm{z}_{1,1}$, $\bm{z}_{1,2}$, \dots are independent.} \lkerror{Sorry if this is a dumb question, but does this imply that if the group distributions are related in some way the assumptions are violated? If not, we should add a comment that this assumption is reasonable in our setting.}\cyrus{They can be arbitrarily similar, as long as samples aren't correlated.}
\fi

First, let $\epsv_{i} \doteq \frange\sqrt{\frac{\ln \frac{1}{\delta}}{2\bm{m}_{i}}}$ 
and
$\etav_{i} \doteq 2\Rade_{\bm{m}_{i}}(\loss \circ \HC, \ProbDist_{i} ) + \epsv_{i}$,
where $\bm{m}_{i}$ is the sample size for group $i$ and $\frange$ is the range of loss values in $\loss \circ \HC$.
%
Recall that the empirical malfare minimization task 
is to select
\[
\hat{h} \doteq \argmin_{h' \in \HC} \Malfare\!\left( j \mapsto \smash{\ERisk}(h', \bm{z}_{j}) 
\right) \enspace,
\]
but since we can't observe sample $i$ yet, we (pessimistically) upper-bound the objective value (w.h.p.) as
{
\begin{align*}
&
\inf_{{h' \in \HC}} \Malfare\!\left( j \mapsto \smash{\ERisk}(h', \bm{z}_{j}) ; \wv \right) \leq \\
 &
 \inf_{{h' \in \HC}} \Malfare\!\left( j \mapsto \begin{cases} j \! \neq \! i \!\! & \smash{\ERisk}(h                       ', \bm{z}_{j}) \\ j \! = \! i \!\! & \Risk(h', \ProbDist_{i}) + \epsv_{i} \end{cases} 
	 \right) \enspace,
\end{align*}}%
and (optimistically) lower-bound the empirical malfare of all $h \in \HC$, w.h.p.\ simultaneously, as
{
\begin{align*}
&\Malfare\!\left( j \mapsto \smash{\ERisk}(h, \bm{z}_{j}) ; \wv \right) \geq \\
 &\Malfare\!\left( j \mapsto \begin{cases} j \! \neq \! i \!\! & \smash{\ERisk}(h, \bm{z}_{j}) \\ j \! = \! i \!\! & \Risk(h, \ProbDist_{i}) - \etav_{i} \end{cases} 
    \right) \enspace.
\end{align*}}%

Via this analysis, we then construct our theoretical class, which with high probability shall contain the empirical malfare minimizer $\hat{h}$,
as
the subset $\HC_{i}^{*} \subseteq \HC$ constrained to $h$ such that
{
\begin{align}
    &\Malfare\!\left( j \mapsto \begin{cases} j \! \neq \! i \!\! & \ERisk(h, \bm{z}_{j}) \\ j \! = \! i \!\! & \Risk(h, \ProbDist_{i}) - \etav_{i}^{} \end{cases} 
	 \right) \leq \notag \\ &\inf_{\!h' \in \HC\!} \Malfare\!\left( j \mapsto \begin{cases} j \! \neq \! i \!\! & \ERisk(h', \bm{z}_{j}) \\ j \! = \! i \!\! & \Risk(h', \ProbDist_{i}) + \epsv_{i} \end{cases} \label{eq:theoretical-restricted-class}
	 \right)
  \enspace.
\end{align}%
}%
This construction is valid (formalized in \cref{thm:bounds-theoretical}), as we took any $h$ that optimistically could outperform a pessimistic estimate of the empirical objective.

The
LHS is a ``best case'' estimate of the empirical malfare of a candidate hypothesis, whereas the
RHS
is a ``worst case'' estimate of the minimal empirical malfare, because we want our restricted hypothesis class to be large enough to
contain
any $h \in \HC$ that might be the empirical malfare minimizer.
In particular, the LHS uses a Rademacher average bound
\eqref{eq:rade-textbook},
as the
bound must apply to all $h \in \HC$, but a simple tail-bound term
\eqref{eq:hoeff-bound}
suffices on the RHS, as we are comparing to a bound involving some specific $h'$ (not dependent on the data $\bm{z}_{i}$).

Intuitively,
for utilitarian malfare,
$\hat{\HC}_{i}$ describes models that definitely perform well for
groups $j \neq i$,
and will probably perform well for group $i$. 
Some malfare functions, such as power-means, are undefined for negative risk values, and the LHS
risk lower bounds $\Risk(h, \ProbDist_{i}) - \etav_{i}^{}$
may be negative. 
However, if we assume risk (or loss) is nonnegative,
we may use the risk lower bound $\max(0, \Risk(h, \ProbDist_{i}) - \etav_{i}^{})$,
which preserves convexity, continuity, and even differentiability if $p < \infty$ 
except around the point $\bm{0}$.

Observe now that, conditioning on $\bm{z}_{j}$ for each $j \neq i$, with high probability over choice of $\bm{z}_{i}$, empirical malfare minimization yields some $\hat{h} \in \HC_{i}^{*}$.
Therefore, for all intents and purposes, learning occurs over $\HC_{i}^{*}$, and we may thus use Rademacher averages over this restricted class to bound generalization error for group $i$.
%
Formally put, we have the following result.

\begin{restatable}[Theoretical Group-Regularized Malfare Bounds]{theorem}{thmboundstheoretical}
\label{thm:bounds-theoretical}
Suppose a monotonic 
\emph{malfare function} \linebreak[3]$\Malfare(\cdot): \R^{\NGroups} \to \R$, \emph{hypothesis class} $\HC \subseteq \X \to \Y'$, \emph{loss function} $\loss: \Y' \times \Y \to \R$, per-group \emph{distributions} $\ProbDist_{1:\NGroups}$ over $\X \times \Y$, and per-group \emph{samples} $\bm{z}_{1:\NGroups}$, with $\bm{z}_{j} \distributed \ProbDist_{j}^{\bm{m}_{j}}$ for each group $j$. 
Fix any group index $i$, and take $\HC_{i}^{*}$ defined as in \eqref{eq:theoretical-restricted-class}.
The following then hold. 
\begin{enumerate}[wide,labelwidth=0pt,labelindent=0pt,nosep]
\item With probability at least $1 - 2\delta$ over choice of $\bm{z}_{i}$, it holds that
$\hat{h} \in \HC_{i}^{*}$.
\item With probability at least $1 - 4\delta$ over choice of $\bm{z}_{i}$, 
\[
\abs{ \Risk(\hat{h}, \ProbDist_{i}) - \ERisk(\hat{h}, \bm{z}_{i}) } \leq 2\Rade_{\bm{m}_{i}}(\loss \circ \HC_{i}^{*}, \ProbDist_{i} ) + \epsv_{i} \enspace.
\]
\end{enumerate}
\end{restatable}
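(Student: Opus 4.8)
The plan is to fix the group index $i$ and condition throughout on the samples $\bm{z}_{j}$ for all $j \neq i$, treating them (and hence every $\ERisk(\cdot, \bm{z}_{j})$, $j \neq i$) as constants; all randomness and all stated probabilities are then over the draw of $\bm{z}_{i} \distributed \ProbDist_{i}^{\bm{m}_{i}}$. Under this conditioning the restricted class $\HC_{i}^{*}$ of \eqref{eq:theoretical-restricted-class} and the minimizer $h^{\dagger} \doteq \argmin_{h' \in \HC} \overline{M}(h')$ of its right-hand side are \emph{deterministic}, i.e., independent of $\bm{z}_{i}$ --- this is precisely why the class is defined through $\Risk(\cdot, \ProbDist_{i})$ rather than $\ERisk(\cdot, \bm{z}_{i})$. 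I introduce the shorthand $E(h) \doteq \Malfare(j \mapsto \ERisk(h, \bm{z}_{j}))$ for the true empirical malfare, and let $\underline{M}(h)$, $\overline{M}(h)$ be the ``best-case'' and ``worst-case'' proxies obtained by substituting $\Risk(h, \ProbDist_{i}) - \etav_{i}$ and $\Risk(h, \ProbDist_{i}) + \epsv_{i}$ respectively into slot $i$, so that $\HC_{i}^{*} = \{h \in \HC : \underline{M}(h) \leq \inf_{h'} \overline{M}(h')\}$.

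For Part 1, I would chain three facts. First, a \emph{one-tailed} supremum-deviation bound over the full class $\HC$ with the \emph{true} Rademacher average (symmetrization plus one application of McDiarmid, cf.\ \eqref{eq:rade-textbook}) gives, with probability at least $1 - \delta$, that $\Risk(h, \ProbDist_{i}) - \etav_{i} \leq \ERisk(h, \bm{z}_{i})$ simultaneously for all $h \in \HC$; by monotonicity of $\Malfare$ in coordinate $i$, this yields $\underline{M}(h) \leq E(h)$ for all $h$, in particular $\underline{M}(\hat{h}) \leq E(\hat{h})$. Second, since $\hat{h}$ minimizes $E$, we have $E(\hat{h}) \leq E(h^{\dagger})$. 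Third, because $h^{\dagger}$ is fixed (independent of $\bm{z}_{i}$), a \emph{single-hypothesis} one-tailed Hoeffding bound \eqref{eq:hoeff-bound} gives, with probability at least $1 - \delta$, $\ERisk(h^{\dagger}, \bm{z}_{i}) \leq \Risk(h^{\dagger}, \ProbDist_{i}) + \epsv_{i}$, whence (again by monotonicity) $E(h^{\dagger}) \leq \overline{M}(h^{\dagger}) = \inf_{h'} \overline{M}(h')$. Composing, $\underline{M}(\hat{h}) \leq E(\hat{h}) \leq E(h^{\dagger}) \leq \inf_{h'} \overline{M}(h')$, so $\hat{h} \in \HC_{i}^{*}$; a union bound over the two failure events yields probability $1 - 2\delta$.

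For Part 2, I would apply the \emph{two-tailed} supremum-deviation bound of \eqref{eq:rade-textbook} --- again with the true Rademacher average, so the constant $\epsv_{i}$ (not $2\epsv_{i}$) suffices and each tail costs one McDiarmid application --- but now \emph{to $\HC_{i}^{*}$ in place of $\HC$}. This is legitimate precisely because $\HC_{i}^{*}$ is fixed and does not depend on $\bm{z}_{i}$: with probability at least $1 - 2\delta$, $\sup_{h \in \HC_{i}^{*}} \abs{\ERisk(h, \bm{z}_{i}) - \Risk(h, \ProbDist_{i})} \leq 2\Rade_{\bm{m}_{i}}(\loss \circ \HC_{i}^{*}, \ProbDist_{i}) + \epsv_{i}$. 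On the intersection of this event with the Part 1 event $\{\hat{h} \in \HC_{i}^{*}\}$, which by a union bound holds with probability at least $1 - 4\delta$, the learned $\hat{h}$ lies in the class over which the supremum is taken, so $\abs{\Risk(\hat{h}, \ProbDist_{i}) - \ERisk(\hat{h}, \bm{z}_{i})}$ is dominated by that supremum, giving the stated bound.

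The main obstacle, and the conceptual crux of the construction, is guaranteeing that $\HC_{i}^{*}$ is independent of $\bm{z}_{i}$: the Rademacher bound invoked in Part 2 requires the hypothesis class to be fixed \emph{before} $\bm{z}_{i}$ is drawn, and a class defined via $\ERisk(\cdot, \bm{z}_{i})$ would create a circular dependence that invalidates concentration. Routing the constraint through $\Risk(\cdot, \ProbDist_{i})$ resolves this (at the cost of an unobservable, theoretical class), and it is also what makes $h^{\dagger}$ a legitimate fixed comparator for the single-hypothesis Hoeffding step in Part 1. The only remaining technicality is attainment of the infimum defining $h^{\dagger}$; since the setup already presumes $\hat{h}$ is attained, I would likewise assume (e.g., via mild compactness/continuity of $\loss \circ \HC$) that $h^{\dagger}$ exists, with a standard near-minimizer relaxation available otherwise.
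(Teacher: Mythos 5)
Your proposal is correct and follows essentially the same route as the paper's own proof: a one-tailed uniform (Rademacher/McDiarmid) bound over $\HC$ for the learned $\hat{h}$, a one-tailed single-hypothesis Hoeffding bound for the fixed infimum-realizing comparator, chained through monotonicity of $\Malfare(\cdot)$ to place $\hat{h}$ in $\HC_{i}^{*}$ with probability $1-2\delta$, followed by a two-tailed supremum-deviation bound over the ($\bm{z}_{i}$-independent) restricted class and a union bound for part 2. You even anticipate the paper's two side remarks --- the careful one-tail accounting that sharpens $1-4\delta$ to $1-2\delta$ in part 1, and the attainment-of-the-infimum technicality --- so there is nothing to add.
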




\subsection{
Empirical Restricted 
Classes}
\label{sec:bounds:empirical}
$\HC_{i}^{*}$ is an object only of theoretical interest (it is not actually known, since it depends on $\ProbDist_{i}$).
Consequently, without more information, $\Rade_{\bm{m}_{i}}(\loss \circ \HC_{i}^{*}, \ProbDist_{i})$, and thus the bounds of \cref{thm:bounds-theoretical}, can not be computed.
\iftrue
\begin{figure}
\centering
\vspace{4pt}
\begin{tikzpicture}[
    scale=0.95,
    hclass/.style={draw=black,very thick},
    ehclass/.style={hclass,dashed,fill=red,draw opacity=0.6,fill opacity=0.2,decorate,decoration={random steps,segment length=6pt,amplitude=1pt}},
    ]

\draw[hclass,fill=yellow,fill opacity=0.3] (0, -0.25) circle (3.35);
\node (hs0) at (0, 2.65) { \begin{tabular}{c} \large$\HC$ 
\end{tabular}};

\path[ehclass,draw,use Hobby shortcut,closed=true, fill=red,fill opacity=0.3,xscale=1.2] (2, 0) .. (0, -2.2) .. (-1.6, -2) .. (-1.8, 0) .. (0, 1.1);

\path[ehclass,draw,use Hobby shortcut,closed=true, fill=red,fill opacity=0.3] (2, 0) .. (0, -2.2) .. (-1.6, -2) .. (-2, 0) .. (0, 2);

\path[ehclass,draw,use Hobby shortcut,closed=true,xscale=1.3,yscale=1.2] (1.2, 0) .. (0, -1.5) .. (-0.8, -1) .. (-1.1, 0) .. (0, 1.6) .. (0.2, 1.7);


\node (ehs1) at (0, -3) { \begin{tabular}{c}  Samples of $\hat{\HC}_{i}$ 
\end{tabular}};

\draw (ehs1) -- (0.5, -2.1);
\draw (ehs1) -- (-0.5, -2.3);
\draw (ehs1) -- (0, -1.8);

\draw[hclass,fill=blue!30!white] (0, 0) circle (1);
\node (hs1) at (0, 0.18) { \begin{tabular}{c} $\HC_{i}^{*}$ 
\end{tabular}};

\node (hh) at (0.36, -0.58) {\small \raisebox{0.2ex}{$\bm{\cdot}$} \!\! $\hat{h}$};

\node (hs) at 
    (0.8, 1.4) {\small ${h}^{*}$ \!\!\!\! \raisebox{-0.2ex}{$\bm{\cdot}$}}; 

\end{tikzpicture}
\vspace{2pt}
\caption{
{Visualization of unrestricted class $\HC$, 
theoretical restricted class $\HC_{i}^{*} $, 
and samples of empirical restricted class $\hat{\HC}_{i}$ (varying $\bm{z}_{i}$).
One possible empirical malfare minimizer $\hat{h}$ (contained by $\hat{\HC}_{i}$ and $\HC_{i}^{*}$ with high probability), as well as the true malfare minimzer $h^{*}$ (which may fall outside of $\HC_{i}^{*}$ or $\hat{\HC}_{i}$ due to overfitting to groups other than $i$) are also shown.%
}}
\label{fig:theory-vis}
\end{figure}
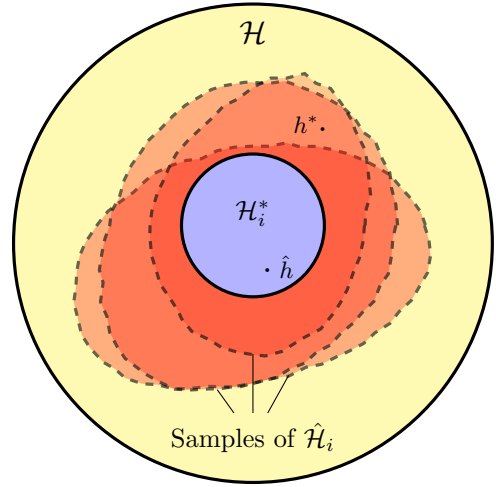
\fi
We remedy this issue here, relaxing dependence on the distribution $\ProbDist_{i}$ by replacing it with dependence on the training sample $\bm{z}_{i}$ and thus establishing a new \emph{empirically restricted hypothesis class}.

Note that we can't simply substitute $\ERisk(h, \bm{z}_{i})$ for $\Risk(h, \ProbDist_{i})$, as \cref{thm:bounds-theoretical} 
clearly requires the restricted hypothesis class $\HC_{i}^{*}$ to be fixed before observing the training data $\bm{z}_{i}$.
We account for this by indirectly using $\ERisk(h, \bm{z}_{i})$ to bound $\Risk(h, \ProbDist_{i})$. 
%
In particular, 
take $\hat{\etav}_{i} \doteq 2\ERade_{\bm{m}_{i}}(\loss \circ \HC, \bm{z}_{i} ) + 2\epsv_{i}$,
and take $\epsv_{i} \doteq \frange\sqrt{\frac{\ln \frac{1}{\delta}}{2\bm{m}_{i}}}$, as in \eqref{eq:rade-textbook}. 
Now, we construct our empirical class $\hat{\HC}_{i}$, which with high probability shall contain the theoretical class $\HC_{i}^{*}$,
as
the subset $\hat{\HC}_{i} \subseteq \HC$ constrained to $h$ such that
{
\begin{align}
\label{eq:empirical-restricted-class}
&
\Malfare\!\left( j \mapsto \begin{cases} j \! \neq \! i \!\! & \ERisk(h, \bm{z}_{j}) \\ j \! = \! i \!\! & \ERisk(h, \bm{z}_{i}) - 2\hat{\etav}_{i} \end{cases} 
    \right) \leq \notag \\ &
    \inf_{\!h' \in \HC\!} \Malfare\!\left( j \mapsto \begin{cases} j \! \neq \! i \!\! & \ERisk(h', \bm{z}_{j}) \\ j \! = \! i \!\! & \ERisk(h', \bm{z}_{i}) + 2\epsv_{i} \end{cases} 
    \right) \enspace.
\end{align}
%
Note that \eqref{eq:empirical-restricted-class} matches \eqref{eq:theoretical-restricted-class}, except risks and Rademacher averages are 
bounded in terms of their empirical counterparts.
In particular, on the LHS, \whp, for all $h \in \HC$ it holds
$\ERisk(h, \bm{z}_{i}) - 2\hat{\etav_{i}} \leq \Risk(h, \ProbDist_{i}) - \etav_{i}$,
and on the RHS, \whp,
$\ERisk(h', \bm{z}_{i}) + 2\epsv_{i} \geq \Risk(h', \ProbDist_{i}) + \epsv_{i}$.
%
\iftrue
\Cref{fig:theory-vis} visualizes the difference between $\hat{\HC}_{i}$ and ${\HC}_{i}^{*}$, as well as other key players.
\fi

Observe now that, with high probability, $\HC_{i}^{*} \subseteq \hat{\HC}_{i}$, therefore we can employ the theoretical properties of $\HC_{i}^{*}$ while being able to compute everything from a sample using $\hat{\HC}_{i}$.
The following theorem makes precise this statement, and should be viewed as an \emph{empirical counterpart} to \cref{thm:bounds-theoretical}.

\begin{restatable}[Empirical Group-Regularized Malfare Bounds]{theorem}{thmboundsempirical}
\label{thm:bounds-empirical}
Suppose as in \cref{thm:bounds-theoretical}.
The following then hold for $\hat{\HC}_{i}$ defined as in \eqref{eq:empirical-restricted-class}.

\begin{enumerate}[wide,labelwidth=0pt,labelindent=0pt,nosep]
\item With probability at least $1 - 4\delta$ over choice of $\bm{z}_{i}$, it holds that $\hat{h} \in \HC_{i}^{*} \subseteq \hat{\HC}_{i}$.

\item With probability at least $1 - 6\delta$, it holds that
\[
\abs{ \Risk(\hat{h}, \ProbDist_{i}) - \ERisk(\hat{h}, \bm{z}_{i}) } \leq 2\ERade_{\bm{m}_{i}}(\loss \circ \hat{\HC}_{i}, \bm{z}_{i} ) + 2\epsv_{i} \enspace.
\]
\end{enumerate}

\end{restatable}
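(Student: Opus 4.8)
The plan is to deduce both claims from the theoretical analysis of \cref{thm:bounds-theoretical} together with the set containment $\HC_{i}^{*} \subseteq \hat{\HC}_{i}$, which is the device that transfers the statistically valid (but uncomputable) bounds over the $\bm{z}_{i}$-independent class $\HC_{i}^{*}$ onto the computable empirical class $\hat{\HC}_{i}$. The first task is therefore to establish this containment with high probability. Taking any $h \in \HC_{i}^{*}$, I would compare the defining inequality \eqref{eq:empirical-restricted-class} of $\hat{\HC}_{i}$ term-by-term against \eqref{eq:theoretical-restricted-class}, using monotonicity of $\Malfare(\cdot)$ to reduce the inclusion to two scalar comparisons of the $j = i$ arguments: a \emph{left-hand} comparison $\ERisk(h, \bm{z}_{i}) - 2\hat{\etav}_{i} \leq \Risk(h, \ProbDist_{i}) - \etav_{i}$ and a \emph{right-hand} comparison of the two infima. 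The left-hand inequality I would obtain uniformly over $h \in \HC$ from \eqref{eq:rade-textbook}: one tail gives $\sup_{h}\bigl(\ERisk(h, \bm{z}_{i}) - \Risk(h, \ProbDist_{i})\bigr) \leq \hat{\etav}_{i}$, while the Rademacher-concentration half of \eqref{eq:rade-textbook} gives $\etav_{i} \leq \hat{\etav}_{i}$, so that $2\hat{\etav}_{i} - \etav_{i} \geq \hat{\etav}_{i}$ absorbs the deviation, and monotonicity then makes the empirical left-hand side dominated by the theoretical one for every $h$.

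The right-hand comparison --- showing the empirical threshold (infimum) is at least the theoretical threshold --- is the step I expect to be the main obstacle. The difficulty is precisely the one flagged before \cref{thm:bounds-empirical}: the empirical infimizer $\argmin_{h'}\Malfare(\cdots)$ depends on the sample $\bm{z}_{i}$, so a single fixed-hypothesis Hoeffding bound \eqref{eq:hoeff-bound} of width $\epsv_{i}$ does not control $\Risk(h', \ProbDist_{i}) - \ERisk(h', \bm{z}_{i})$ at that data-dependent minimizer. I would resolve this with a \emph{uniform} one-tailed bound over group $i$, namely $\Risk(h', \ProbDist_{i}) \leq \ERisk(h', \bm{z}_{i}) + \hat{\etav}_{i}$ simultaneously over $h'$, so that (after the empirical threshold's slack absorbs this deviation) the empirical infimum dominates the theoretical one; reconciling this Rademacher-scale slack with the stated $2\epsv_{i}$ is the delicate bookkeeping I would scrutinize most carefully, as a pointwise $\epsv_{i}$ alone does not suffice here. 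Combining the two comparisons gives $\HC_{i}^{*} \subseteq \hat{\HC}_{i}$, and union-bounding this with the event $\hat{h} \in \HC_{i}^{*}$ from part~(1) of \cref{thm:bounds-theoretical} establishes part~(1) at confidence $1 - 4\delta$.

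For part~(2), the key idea is to never take a Rademacher average over the data-dependent class $\hat{\HC}_{i}$ directly, but instead to route the estimate through the fixed class $\HC_{i}^{*}$. Starting from part~(2) of \cref{thm:bounds-theoretical}, which at confidence $1 - 4\delta$ bounds the group-$i$ gap by $2\Rade_{\bm{m}_{i}}(\loss \circ \HC_{i}^{*}, \ProbDist_{i}) + \epsv_{i}$, I would first apply the concentration half of \eqref{eq:rade-textbook} to the \emph{fixed} class $\HC_{i}^{*}$ --- legitimate precisely because $\HC_{i}^{*}$ does not depend on $\bm{z}_{i}$ --- to get $2\Rade_{\bm{m}_{i}}(\loss \circ \HC_{i}^{*}, \ProbDist_{i}) \leq 2\ERade_{\bm{m}_{i}}(\loss \circ \HC_{i}^{*}, \bm{z}_{i}) + \epsv_{i}$, and then invoke monotonicity of empirical Rademacher averages under class inclusion together with $\HC_{i}^{*} \subseteq \hat{\HC}_{i}$ from part~(1) to replace $\ERade_{\bm{m}_{i}}(\loss \circ \HC_{i}^{*}, \bm{z}_{i})$ by the computable $\ERade_{\bm{m}_{i}}(\loss \circ \hat{\HC}_{i}, \bm{z}_{i})$. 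Chaining these yields $\abs{\Risk(\hat{h}, \ProbDist_{i}) - \ERisk(\hat{h}, \bm{z}_{i})} \leq 2\ERade_{\bm{m}_{i}}(\loss \circ \hat{\HC}_{i}, \bm{z}_{i}) + 2\epsv_{i}$, and a union bound over the event of \cref{thm:bounds-theoretical}, the extra concentration event, and the containment event gives the claimed $1 - 6\delta$.
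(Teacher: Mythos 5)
Your architecture is the paper's: part~1 via the two scalar comparisons you describe (a uniform left-hand comparison and an infimum right-hand comparison, combined through monotonicity of $\Malfare(\cdot)$), union-bounded with the event $\hat{h} \in \HC_{i}^{*}$ from \cref{thm:bounds-theoretical}; part~2 by running a supremum-deviation bound over the fixed class $\HC_{i}^{*}$ and transferring it to $\hat{\HC}_{i}$ via the containment and monotonicity of empirical Rademacher averages under inclusion. Your left-hand comparison (one uniform tail of \eqref{eq:rade-textbook} plus $\etav_{i} \leq \hat{\etav}_{i}$) is exactly the single McDiarmid event the paper uses, and your part~2 differs only in bookkeeping: the paper applies two tails of the \emph{empirical} supremum-deviation bound directly to $\HC_{i}^{*}$ rather than passing through $\Rade_{\bm{m}_{i}}(\loss \circ \HC_{i}^{*}, \ProbDist_{i})$ and then concentrating it, which is how it lands on $6\delta$ where your accounting gives $7\delta$.

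The step you single out as the main obstacle is handled in the paper by a single one-tailed Hoeffding bound at the fixed $h'$ realizing the \emph{theoretical} infimum (the same $h'$ as in the proof of \cref{thm:bounds-theoretical}), from which the paper asserts that the theoretical threshold is at most the empirical threshold. Your hesitation is warranted: writing $A(h) \doteq \Malfare(\ldots, \Risk(h,\ProbDist_{i})+\epsv_{i}, \ldots)$ and $B(h) \doteq \Malfare(\ldots, \ERisk(h,\bm{z}_{i})+2\epsv_{i}, \ldots)$, the Hoeffding bound at the $A$-minimizer $h'$ gives $\inf_{h} A(h) = A(h') \leq B(h')$, but $B(h') \geq \inf_{h} B(h)$, so the desired inequality between the two infima does not follow. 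What is actually needed is control of $\Risk - \ERisk$ at the minimizer of $B$ (which depends on $\bm{z}_{i}$) or uniformly over $\HC$; the empirical infimum is optimistically biased downward by up to roughly $2\Rade_{\bm{m}_{i}}(\loss \circ \HC, \ProbDist_{i})$, so a pointwise $\epsv_{i}$ of slack cannot absorb it. Your proposed uniform one-tailed bound is the right repair of the logic, but, as you anticipate, it is not compatible with the $+2\epsv_{i}$ term in \eqref{eq:empirical-restricted-class}, which would need to grow to Rademacher scale (e.g., $\epsv_{i} + \hat{\etav}_{i}$). In short, you have reproduced the paper's proof wherever it is sound, and the one step you could not close is precisely the step at which the paper's own argument is questionable.
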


\Cref{thm:bounds-empirical} satisfies our primary goal of showing per-group generalization bounds for fair learning that leverage information from other groups.
In particular,
when $\hat{\HC}_{i} \subset \HC$,
we obtain sharper generalization bounds, which 
quantifies the intuition that 
training a shared model is less susceptible to overfitting than training 
per-group models.
\Cref{thm:bounds-empirical}~part~2 should be contrasted with \eqref{eq:group-gen}, which gives a similar guarantee 
using Rademacher averages of the \emph{unrestricted class} $\HC$.
\Cref{coro:emp-malfare-bounds} now applies these bounds to improve the state-of-the-art generalization guarantees for (nonlinear) malfare objectives,
which would otherwise depend on Rademacher averages of $\hat{\HC}_{i}$ rather than $\HC$,
cf.\
\eqref{eq:m-gen}.

\begin{restatable}[Empirical Malfare Generalization Bounds]{corollary}{coroempmalfarebounds}
\label{coro:emp-malfare-bounds}
Suppose as in \cref{thm:bounds-empirical}.
Suppose also that there exists some $\lambda > 0$ and norm $\norm{\cdot}_{\Malfare}$ such that $\Malfare(\cdot)$ is $\lambda$-$\norm{\cdot}_{\Malfare}$ Lipschitz continuous,
i.e., $\forall \lv,\lv'$:
$
\Malfare(\lv + \lv') \leq \Malfare(\lv) + \lambda \norm{\lv'}_{\Malfare}
$.
We then have:
\begin{enumerate}[wide,labelwidth=0pt,labelindent=0pt,nosep]
\item 
With probability at least $1 - 5\NGroups\delta$, the true malfare of $\hat{h}$ is bounded by 
\\
\begin{minipage}{0.48\textwidth}
{\small
\begin{align*}
\Malfare\!\left(j \mapsto \Risk(\smash{\hat{h}}, \ProbDist_{j}) 
    \right)
\hspace{-2.08cm} & \\
&\leq \Malfare\!\left(j \mapsto \ERisk(\smash{\hat{h}}, \bm{z}_{j}) + 2\ERade_{\bm{m}_{j}}(\hat{\HC}_{j}, \bm{z}_{j}) + 2\epsv_{j} 
    \right) \\
&\leq \Malfare\!\left(j \mapsto \ERisk(\smash{\hat{h}}, \bm{z}_{j}) \! \right) + \lambda\norm{j \mapsto 2\ERade_{\bm{m}_{j\!}}(\hat{\HC}_{j}, \bm{z}_{j}) + 2\epsv_{j}}_{\Malfare\!}
\enspace.
\end{align*}
}
\end{minipage}
\item
With probability at least $1 - 6\NGroups\delta$, we bound the suboptimality of $\smash{\hat{h}}$ as \\
\begin{minipage}{0.48\textwidth}{
\small
\begin{align*}
\Malfare\left(j \mapsto \Risk(\smash{\hat{h}}, \ProbDist_{j}) 
    \right) \hspace{-0.85cm} & \\
 &\leq \Malfare\left(j \mapsto \Risk(h^{*}\!, \ProbDist_{j}) + 2\ERade_{\bm{m}_{j}}(\hat{\HC}_{j}, \bm{z}_{j}) + 3\epsv_{j} 
    \right) \\
\ \implies \ \; \abs{ \vphantom{\ERisk} \Malfare\left(j \mapsto \Risk(h^{*}\!, \ProbDist_{j}) 
    \right) - \Malfare\left(j \mapsto \Risk(\smash{\hat{h}}, \ProbDist_{j}) 
    \right) } \hspace{-5.25cm} & \\
 &\leq \lambda\norm{j \mapsto 2\ERade_{\bm{m}_{j}}(\hat{\HC}_{j}, \bm{z}_{j}) + 3\epsv_{j}}_{\Malfare}
\enspace.
\end{align*}
}
\end{minipage}
%
\end{enumerate}
\end{restatable}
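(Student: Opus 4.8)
The plan is to reduce both parts of \cref{coro:emp-malfare-bounds} to the per-group guarantee of \cref{thm:bounds-empirical}~part~2, aggregated through the monotone malfare function and then loosened via Lipschitz continuity, with all randomness handled by union bounds over the $\NGroups$ groups. Throughout I would use \emph{one-tailed} versions of the underlying deviation bounds, since each displayed inequality only needs control in a single direction; this is exactly what keeps the failure probabilities at $5\NGroups\delta$ and $6\NGroups\delta$ rather than doubling them.

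For part~1, I would first invoke the one-tailed (upper) form of \cref{thm:bounds-empirical}~part~2 for each group $j$, namely $\Risk(\hat{h}, \ProbDist_{j}) \leq \ERisk(\hat{h}, \bm{z}_{j}) + 2\ERade_{\bm{m}_{j}}(\hat{\HC}_{j}, \bm{z}_{j}) + 2\epsv_{j}$, and take a union bound so that this holds simultaneously for all $j$ with probability at least $1 - 5\NGroups\delta$ (the saving from $6$ to $5$ coming from needing only a single tail). Applying the monotone $\Malfare(\cdot)$ to this coordinatewise vector inequality yields the first displayed inequality immediately. The second displayed inequality is then pure Lipschitz continuity: writing the argument as $(j \mapsto \ERisk(\hat{h}, \bm{z}_{j})) + (j \mapsto 2\ERade_{\bm{m}_{j}}(\hat{\HC}_{j},\bm{z}_{j}) + 2\epsv_{j})$ and applying $\Malfare(\lv + \lv') \leq \Malfare(\lv) + \lambda\norm{\lv'}_{\Malfare}$ peels the generalization terms out into the additive $\lambda\norm{\cdot}_{\Malfare}$ slack.

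For part~2 I would build the ``sandwich'' familiar from \eqref{eq:m-subopt}, but over the restricted classes. The key chain, holding coordinatewise, is: (i) the one-tailed (lower) form of \cref{thm:bounds-empirical}~part~2, i.e.\ $\Risk(\hat{h}, \ProbDist_{j}) - (2\ERade_{\bm{m}_{j}}(\hat{\HC}_{j},\bm{z}_{j}) + 2\epsv_{j}) \leq \ERisk(\hat{h}, \bm{z}_{j})$; (ii) the empirical-malfare-minimizer property $\Malfare(j \mapsto \ERisk(\hat{h}, \bm{z}_{j})) \leq \Malfare(j \mapsto \ERisk(h^{*}, \bm{z}_{j}))$, which holds deterministically by definition of $\hat{h}$; and (iii) the one-tailed Hoeffding bound \eqref{eq:hoeff-bound} applied to the \emph{fixed} $h^{*}$, $\ERisk(h^{*}, \bm{z}_{j}) \leq \Risk(h^{*}, \ProbDist_{j}) + \epsv_{j}$, union-bounded over groups. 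Chaining (i)--(iii) through monotonicity gives $\Malfare(j \mapsto \Risk(\hat{h}, \ProbDist_{j}) - (2\ERade_{\bm{m}_{j}}(\hat{\HC}_{j},\bm{z}_{j}) + 2\epsv_{j})) \leq \Malfare(j \mapsto \Risk(h^{*}, \ProbDist_{j}) + \epsv_{j})$. Probabilistically, (i) union-bounded costs $5\NGroups\delta$ exactly as in part~1 (now the opposite tail), and (iii) costs a further $\NGroups\delta$, totalling $6\NGroups\delta$. From this sandwich the displayed suboptimality bound follows by Lipschitz continuity, reintroducing $\Risk(\hat{h},\ProbDist_{j})$ on the left and absorbing $\epsv_{j}$ on the right into the combined $\lambda\norm{j \mapsto 2\ERade_{\bm{m}_{j}}(\hat{\HC}_{j},\bm{z}_{j}) + 3\epsv_{j}}_{\Malfare}$ term; the reverse direction of the absolute value is free, since $h^{*}$ is the true malfare minimizer and hence $\Malfare(j \mapsto \Risk(h^{*},\ProbDist_{j})) \leq \Malfare(j \mapsto \Risk(\hat{h}, \ProbDist_{j}))$.

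The hard part will be managing the nonlinearity of $\Malfare(\cdot)$ when moving the per-group confidence terms across the aggregate. Monotonicity alone cannot swap $\ERisk(\hat{h},\bm{z}_{j})$ for $\ERisk(h^{*},\bm{z}_{j})$ once the additive terms are present, because the minimizer guarantee holds only for the malfare of the \emph{bare} empirical risks, not coordinatewise; this is precisely why the subtraction must be kept on the left, inside $\Malfare$, in the sandwich step, and why Lipschitz continuity (not monotonicity) is the tool that finally extracts the $\norm{\cdot}_{\Malfare}$ terms. The genuine subtlety is preserving tightness: to land on the single \emph{combined} norm $\lambda\norm{j \mapsto 2\ERade_{\bm{m}_{j}}(\hat{\HC}_{j},\bm{z}_{j}) + 3\epsv_{j}}_{\Malfare}$ one must keep the Rademacher and confidence terms inside $\Malfare$ and combine them at a single Lipschitz step, since peeling them off separately and invoking the triangle inequality only yields the looser split bound $\lambda\norm{j \mapsto 2\ERade_{\bm{m}_{j}}(\hat{\HC}_{j},\bm{z}_{j}) + 2\epsv_{j}}_{\Malfare} + \lambda\norm{j \mapsto \epsv_{j}}_{\Malfare}$. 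I would therefore treat the final Lipschitz manipulation as the crux of the argument, with the remaining work being routine tail bookkeeping to meet the stated $5\NGroups\delta$ and $6\NGroups\delta$ using one-tailed bounds wherever two-tailed ones are unnecessary.
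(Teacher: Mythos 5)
Your proposal follows the paper's proof in all essentials: the same one-tailed invocation of \cref{thm:bounds-empirical}~part~2 per group with a union bound to reach $1-5\NGroups\delta$, the same use of monotonicity for the first display of part~1 and a single Lipschitz application for the second, and the same three ingredients for part~2 (the part-1 tail events, the empirical-malfare-minimizer property of $\hat{h}$, and one extra one-tailed Hoeffding bound on the fixed $h^{*}$ costing the additional $\NGroups\delta$), with the reverse direction of the absolute value coming for free from the definition of $h^{*}$. Part~1 of your argument is essentially identical to the paper's.

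Where you diverge is the organization of part~2. The paper establishes the first display of part~2 directly --- it substitutes $\Risk(h^{*}\!,\ProbDist_{j})+\epsv_{j}$ for $\ERisk(\hat{h},\bm{z}_{j})$ inside the part-1 bound via the EMM property and monotonicity, keeping all correction terms on the $h^{*}$ side \emph{inside} $\Malfare(\cdot)$, and only then applies Lipschitz continuity once to extract the combined norm $\lambda\norm{j \mapsto 2\ERade_{\bm{m}_{j}}(\hat{\HC}_{j},\bm{z}_{j})+3\epsv_{j}}_{\Malfare}$. You instead keep the corrections on the $\hat{h}$ side in a sandwich of the form $\Malfare(j\mapsto\Risk(\hat{h},\ProbDist_{j})-(2\ERade_{\bm{m}_{j}}(\hat{\HC}_{j},\bm{z}_{j})+2\epsv_{j})) \leq \Malfare(j\mapsto\Risk(h^{*}\!,\ProbDist_{j})+\epsv_{j})$ and defer Lipschitz continuity to the end. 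As you yourself observe, extracting from that sandwich forces two separate Lipschitz applications and lands on the split bound $\lambda\norm{j\mapsto 2\ERade_{\bm{m}_{j}}(\hat{\HC}_{j},\bm{z}_{j})+2\epsv_{j}}_{\Malfare}+\lambda\norm{j\mapsto\epsv_{j}}_{\Malfare}$, which is weaker than the stated combined norm, and it never produces the first display of part~2 at all. Your proposed remedy --- ``combine them at a single Lipschitz step'' --- presupposes exactly the display your route fails to reach, so as written the plan does not recover the stated form of part~2. The tension you flag is real: moving the EMM comparison $\Malfare(j\mapsto\ERisk(\hat{h},\bm{z}_{j}))\leq\Malfare(j\mapsto\ERisk(h^{*}\!,\bm{z}_{j}))$ across coordinatewise shifts is not licensed by bare monotonicity of a nonlinear $\Malfare(\cdot)$, and the paper's substitution step glosses over this; but the corollary's stated form requires that substitution, so you must either perform it (as the paper does) or accept the weaker split-norm conclusion. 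One further minor slip: your step (i) in part~2 is the \emph{same} tail as in part~1, merely rearranged, not the opposite tail --- this does not affect the probability accounting, since the events coincide.
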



The first inequality of parts 1 \& 2 of \cref{coro:emp-malfare-bounds} is 
sharper, but the second is generally more analytically convenient.
\iflongversion
In particular, any power-mean or Gini malfare function $\Malfare(\cdot)$ is Lipschitz-continuous, and moreover subadditive, obeying
\begin{equation}
\label{eq:lip-subadd}
\hspace{-0.1cm} \Malfare(\lv \! + \! \lv') \! - \! \Malfare(\lv) \leq \Malfare(\lv') \leq \norm{\lv'}_{\infty} \enspace,
\end{equation}
and moreover, $p=1$ power-mean malfare and Gini malfare functions with weights $\wv$ obey
\begin{equation}
\label{eq:lip-l1}
\hspace{-0.1cm} \Malfare(\lv \! + \! \lv') \! - \! \Malfare(\lv) \leq \max_{\mathclap{i \in 1, \dots, \NGroups}} \wv_{i} \norm{\lv'}_{1} \enspace.
\end{equation}
Equation~\eqref{eq:lip-subadd} thus bounds malfare generalization error in terms of combinations or \emph{worst-case} of per-group errors, and \eqref{eq:lip-l1} bounds in terms of \emph{sum error}, which can be significantly sharper when weights are near-uniform.
\else
In particular, any power-mean malfare function 
$\Malfare_{p}(\cdot; \wv)$
obeys
\begin{equation}
\label{eq:lip-subadd}
\renewcommand{\!}{\hspace{-0.025em}}
\hspace{-0.2cm}
\Malfare_{p}(\lv \! + \! \lv'\!; \wv) \! - \! \Malfare_{p}(\lv; \wv) \leq \Malfare_{p}(\lv'\!; \wv) \leq \norm{\lv'}_{\infty} \!\!\!\hspace{-0.25em} \enspace, \!\!\!
\end{equation}
thus we bound malfare generalization error in terms of
the generalization error of each group.
\fi

Naturally, one may ask how sharp this localization strategy is.
We now show 
an example where \cref{thm:bounds-empirical} improves slow $\LandauO(\frac{1}{\sqrt{m}})$ convergence rates to fast $\LandauO(\frac{1}{m})$ convergence rates.
%
Consider unit-range 0-dimensional linear regression, 
i.e.,
mean estimation under square loss $\loss$, with $\NGroups = 1$.
Thus we have
\[
\loss \circ \HC_{r} = \left\{ \loss( h_{c}(x), y) = (c - y)^{2} \, \middle| \, c \in [-r, r] \right\}
\]
with $r = 1$. 
Take \emph{constant probability distribution} $\ProbDist = 0$, thus $\bm{y} = \bm{0}$. 
From random walk theory, we have
\begin{align*}
\ERade_{m}(\loss \circ \HC_{r}, \bm{y}) &=
\Expect_{\vsigma} \left[ \sup_{c \in [-r, r]} \frac{1}{m}\sum_{i=1}^{m} \vsigma_{i} (\bm{y}_{i} - c)^{2} \right] \\
 &= \Expect_{\vsigma} \left[ \sup_{c \in [-r, r]} \frac{1}{m}\sum_{i=1}^{m} \vsigma_{i} c^{2}  \right] \\
 &= \frac{r^{2}}{2} {\Expect_{\vsigma} \left[ \, \abs{ \frac{1}{m}\sum_{i=1}^{m}  \vsigma_{i} } \right]} 
 \approx r^{2} 
 {\sqrt{\frac{1}{2\pi m}}}
 \enspace.
\end{align*}
%
To 
construct $\hat{\HC}$, observe that we have $\ERisk(c, \bm{y}) = c^{2}$, thus via \eqref{eq:empirical-restricted-class} we restrict s.t.\ $c^{2} \leq 4\ERade_{m}(\loss \circ \HC_{r}, \bm{y}) + 6\varepsilon \approx \sqrt{\frac{8}{\pi m}} + 6\sqrt{\frac{\ln \frac{1}{\delta}}{ 2m }} \implies 
\abs{c} \leq r \in \LandauTheta \sqrt[4]{\frac{1}{m}}$.
We thus have
\[
\ERade_{m}(\loss \circ \hat{\HC}, \bm{y}) \approx \frange^{2} \mathsmaller{\sqrt{\frac{1}{2\pi m}}} \in \LandauTheta\left( \mathsmaller{\frac{1}{m}} \right)
\enspace,
\]
which asymptotically improves
$\ERade_{m}(\loss \circ {\HC}, \bm{y}) \approx \sqrt{\frac{1}{2\pi m}}$.

\subsection{Monte-Carlo Rademacher Averages of Linear Hypothesis Classes}
\label{sec:bounds:linear}

We now present a method to estimate Rademacher averages for linear hypothesis classes using Monte-Carlo sampling. We start by noting that, in general if $\ell(\hat{y}, y) = f(g(\hat{y}, y))$ and $f$ is $\lambda$-Lipschitz-continuous, then we have, for any $\bm{z} \in (\X \times \Y)^{m}$, that
\begin{equation}
\label{eq:contraction}
    \ERade_{m}(\ell \circ \HC, \bm{z} ) \leq \lambda \ERade_{m}(g \circ \HC, \bm{z} )
    \enspace.
\end{equation}

For this reason, we formulate the Rademacher averages of both linear least-squares regression and logistic regression as follows. Take  $\HC = \left\{ h_{\bm{\beta}}(\bm{x}) \doteq \bm{\beta} \cdot \bm{x} \, \middle| \, \bm{\beta} \in \bm{B} \right\}$ and loss function $\ell(\hat{y}, y) = f(g(\hat{y}, y))$, where for least-squares regression, $g(\hat{y},y) = \hat{y} - {y}$ and $f(u) = u^2$. This is $\lambda$-Lipschitz continuous, assuming bounded $\bm{B}$, $\mathcal{X}$, and $\mathcal{Y}$, with $\lambda = 2 \sup_{\bm{B}, \mathcal{Y}, \mathcal{X}} |\bm{x} \cdot \bm{\beta} - y|$.\footnote{In practice, we compute the Lipschitz constant over $\HC$, rather than over $\hat{\HC}_i \subseteq \HC$, which would require computing the diameter of $\hat{\HC}_i$ or bounding the range of $g \circ \hat{\HC}_{i}$.
}
For logistic regression, in which $\mathcal{Y} = \pm 1$, we have $g(\hat{y},y) = \hat{y} \cdot y$ and $f(u) = \ln(1 + \exp(u) )$, which is
1-Lipschitz. 

\paragraph{Estimation}
Standard methods for bounding Rademacher averages of linear regression classes start by bounding the Rademacher average of $\HC$ itself \citep{shalev2014understanding}. However, this method is loose \citep{cousins2020sharp}, and seems especially so for irregular weight spaces (i.e., those not defined by simple $p$-norms), which known analytic methods can not handle. 

Instead, we directly estimate the Rademacher average of the function family $g \circ \HC$ directly using Monte-Carlo estimation. That is to say, given sampled Rademacher random variables $\vsigma \in (\pm 1)^{n \times m}$ and data sample $\bm{z} \in (\X \times \Y)^{m}$, we compute
\begin{minipage}{0.48\textwidth}
{\small\renewcommand{\!}{}
\begin{equation}
\label{eq:lin-mcera}
\!\!\ERade{}^n_{m}(g \circ \HC,  \bm{z}; \vsigma) \! \doteq \! \frac{1}{n} \smash{ \sum_{k=1}^n  \sup_{\bm{\beta} \in W} \frac{1}{m} \! \sum_{j=1}^{m} } \vsigma_{k,j} g( \bm{x}_{j} \cdot \bm{\beta}, \bm{y}_j ) \enspace.
\end{equation}
}
\end{minipage}
This fully data-dependent method gracefully tolerates arbitrary data distributions and 
parameter spaces, and is loose only in a small amount of Monte-Carlo error and the contraction inequality \citep{cousins2020sharp}.
In practice, we use $\lambda \ERade{}^n_{\bm{m}_i}(g \circ \HC,  \bm{z}_i; \vsigma)$ as a plug-in estimate of $\lambda \ERade{}_{\bm{m}_i}(g \circ \HC,  \bm{z}_i)$, which then bounds $\ERade_{\bm{m}_{i}}(\ell \circ \HC, \bm{z}_{i} )$ via \eqref{eq:contraction}. 
We similarly estimate and bound Rademacher averages over our restricted hypothesis classes 
as $\lambda \ERade{}^n_{\bm{m}_i}(g \circ \hat{\HC}_i,  \bm{z}_i; \vsigma)$.

\begin{restatable}[Convex Optimization for Monte-Carlo Rademacher Averages]{lemma}{lemmacvx}
\label{lemma:cvx}

Suppose the \emph{parameter space} $\bm{B}$ of $\HC$ is a convex set, \emph{loss} $\loss(h_{\bm{\beta}}(\bm{x}), y)$ is convex in $\bm{\beta} \in \bm{B}$ for all $\bm{x} \in \X$, $y \in \Y$, and \emph{malfare} $\Malfare(\cdot): \R^{\NGroups} \to \R$ is quasiconvex and
monotonically increasing in each argument.
Then the parameter spaces of $\hat{\HC}_{i}$ and $\HC^{*}_{i}$ are 
convex sets.

Moreover, if $g \circ \HC$ is an affine function family, then $\ERade{}^n_{m}(g \circ \hat{\HC}_{i}, \bm{z}_{i}; \vsigma)$ 
reduces to maximizing a linear function over a convex set. 
%
Similarly, if we strengthen the quasiconvexity assumption on $\Malfare(\cdot)$ to \emph{convexity}, then 
EMM reduces to minimizing a convex objective over the convex set $\bm{B}$.
%
%
\end{restatable}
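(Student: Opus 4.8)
The plan is to reduce all three claims to one structural fact: for fixed data, each per-group empirical risk $\bm{\beta} \mapsto \ERisk(h_{\bm{\beta}}, \bm{z}_{j})$, and likewise each true risk $\bm{\beta} \mapsto \Risk(h_{\bm{\beta}}, \ProbDist_{i})$, is convex in $\bm{\beta}$, because it is a finite average (respectively an expectation) of the maps $\bm{\beta} \mapsto \loss(h_{\bm{\beta}}(\bm{x}), y)$, each convex by hypothesis, and averaging and integration preserve convexity. The additive corrections $-2\hat{\etav}_{i}$, $-\etav_{i}$, $+2\epsv_{i}$, and $+\epsv_{i}$ appearing in \eqref{eq:empirical-restricted-class} and \eqref{eq:theoretical-restricted-class} are constants independent of $h$, so they preserve convexity, and the infima on the right-hand sides of those constraints are also constants, since they range over all of $\HC$ rather than over the candidate $h$.

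For the convexity of the parameter spaces of $\hat{\HC}_{i}$ and $\HC^{*}_{i}$, I would write the defining constraint of $\hat{\HC}_{i}$ as $F(\bm{\beta}) \leq c$, where $c$ is the constant right-hand side and $F(\bm{\beta}) \doteq \Malfare\bigl(j \mapsto r_{j}(\bm{\beta})\bigr)$ with $r_{j}$ the (constant-shifted, hence convex) risk terms. The central step is to show $F$ is quasiconvex. Fix $\bm{\beta}_{1}, \bm{\beta}_{2}$ and $\lambda \in [0,1]$; by convexity of each $r_{j}$ we have $r_{j}(\lambda\bm{\beta}_{1} + (1-\lambda)\bm{\beta}_{2}) \leq \lambda r_{j}(\bm{\beta}_{1}) + (1-\lambda) r_{j}(\bm{\beta}_{2})$, so monotonicity of $\Malfare(\cdot)$ gives $F(\lambda\bm{\beta}_{1} + (1-\lambda)\bm{\beta}_{2}) \leq \Malfare\bigl(\lambda\, r(\bm{\beta}_{1}) + (1-\lambda)\, r(\bm{\beta}_{2})\bigr)$, and quasiconvexity of $\Malfare(\cdot)$ bounds this by $\max\bigl(F(\bm{\beta}_{1}), F(\bm{\beta}_{2})\bigr)$. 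Hence every sublevel set $\lbrace \bm{\beta} : F(\bm{\beta}) \leq c \rbrace$ is convex; intersecting with the convex set $\bm{B}$ yields the convex parameter space of $\hat{\HC}_{i}$. The identical argument, with $\ERisk(\cdot, \bm{z}_{i})$ replaced by the (still convex) true risk $\Risk(\cdot, \ProbDist_{i})$, handles $\HC^{*}_{i}$.

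For the second claim, I would fix a Rademacher draw $\vsigma_{k,\cdot}$ and note that, under the affine-family hypothesis, $\bm{\beta} \mapsto g(\bm{x}_{j} \cdot \bm{\beta}, \bm{y}_{j})$ is affine, so the inner objective $\frac{1}{m}\sum_{j} \vsigma_{k,j}\, g(\bm{x}_{j}\cdot\bm{\beta}, \bm{y}_{j})$ of \eqref{eq:lin-mcera} is affine in $\bm{\beta}$; dropping its irrelevant additive constant, each of the $n$ inner problems is the maximization of a linear functional over the parameter space of $\hat{\HC}_{i}$, which is convex by the first claim. For the third claim, strengthening $\Malfare(\cdot)$ to convex upgrades the composition bound: the chain now reads $F(\lambda\bm{\beta}_{1} + (1-\lambda)\bm{\beta}_{2}) \leq \Malfare\bigl(\lambda\, r(\bm{\beta}_{1}) + (1-\lambda)\, r(\bm{\beta}_{2})\bigr) \leq \lambda \Malfare\bigl(r(\bm{\beta}_{1})\bigr) + (1-\lambda)\Malfare\bigl(r(\bm{\beta}_{2})\bigr)$, so $\bm{\beta} \mapsto \Malfare\bigl(j \mapsto \ERisk(h_{\bm{\beta}}, \bm{z}_{j})\bigr)$ is convex on the convex set $\bm{B}$, i.e.\ EMM is a convex program.

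The only real obstacle is the composition step in the first claim, and the subtlety there is that $\Malfare(\cdot)$ is assumed merely quasiconvex rather than convex; this is precisely why monotonicity is indispensable, since it lets us first absorb the convex combination of the $r_{j}$'s into the arguments of $\Malfare(\cdot)$ before invoking quasiconvexity. Everything else—convexity of the risks, constancy of the shifts and of the right-hand-side infima, and the affine-implies-linear reduction—is routine once this composition principle is in hand.
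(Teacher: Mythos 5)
Your proposal is correct and follows essentially the same route as the paper's proof: convexity of the per-group risks, constancy of the shifts and the right-hand-side infima, quasiconvexity of the composed constraint via monotonicity plus quasiconvexity of $\Malfare(\cdot)$ (hence convex sublevel sets intersected with $\bm{B}$), linearity of the MCERA inner objectives under the affine hypothesis, and the convex-composition upgrade for EMM. The only difference is that you spell out the composition chain explicitly where the paper defers to ``standard compositional rules'' and a citation to \citet{boyd2004convex}.
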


\paragraph{Visualizing $\hat{\HC}_i$ in least-squares regression}

\begin{figure}
    \centering

\includegraphics[width=\columnwidth]{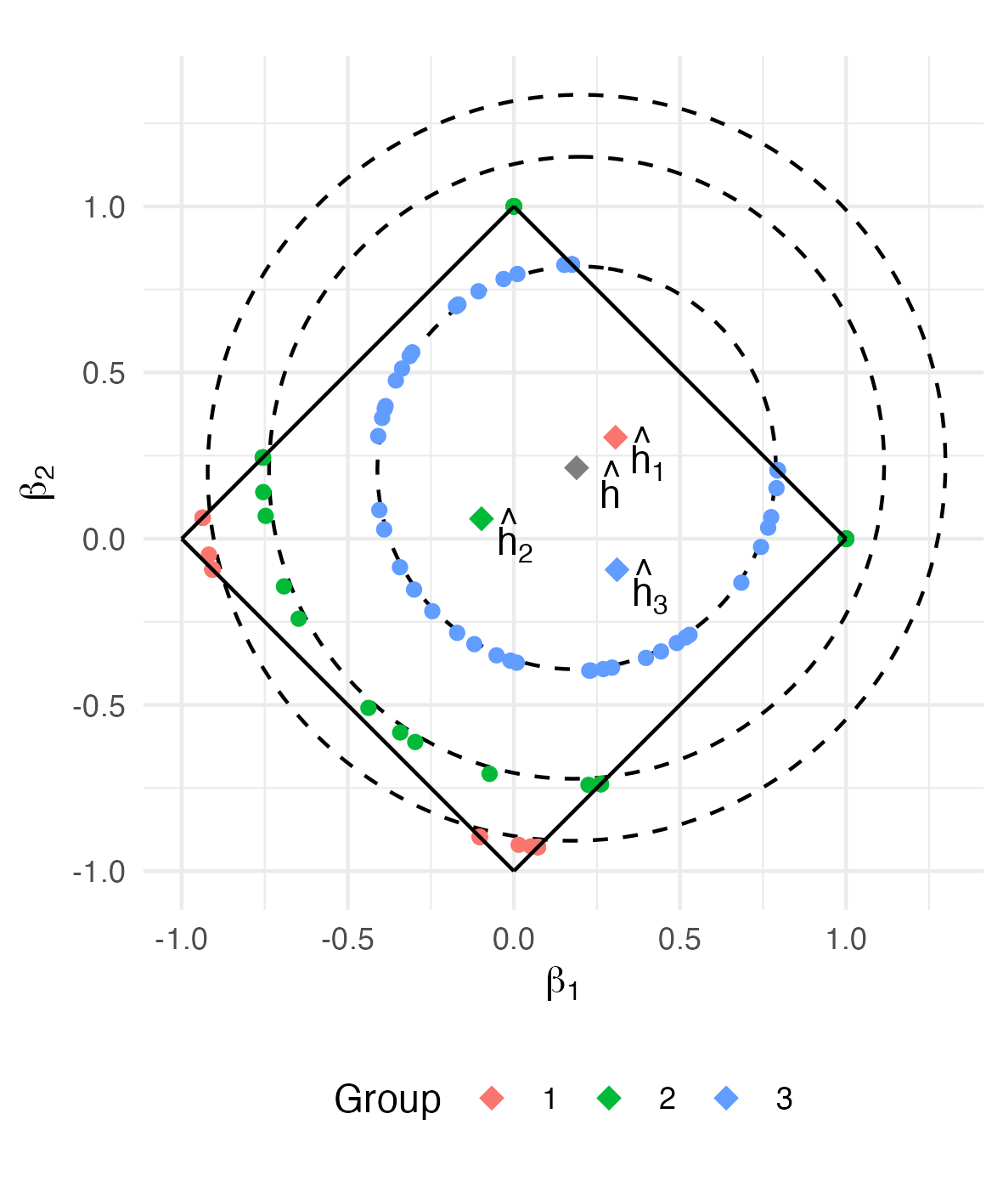}

\vspace{-15pt}
    \caption{Rademacher average samples in the parameter space of $\hat{\HC}_i$ for each group $i \in \{1, 2, 3\}$.
    }
%
\label{fig:contours}
\end{figure}

\begin{table}[t]
\caption{Sample sizes $\bm{m}_{1:3}$, parameter vectors $\bm{\beta}_{1:3}$, and Monte-Carlo empirical Rademacher averages (MCERA) for both $\HC$ and $\smash{\hat{\HC}_{1:3}}$.}
\vspace{-8pt}
\label{table:contours}

\centering

\begin{tabular}{cr|c|ll}
\small Group & $\bm{m}_{i}$ & True $\bm{\beta}$ & \multicolumn{2}{c}{ MCERA } \\ 
ID & & & \ \ $\HC$ & \ \ $\hat{\HC}_{i}$ \\
\hline
1 & 6500 & (0.3,0.3) & 0.047 & 0.046 \\
2 & 3000 & (-0.1,0.1) & 0.075 & 0.046 \\
3 & 500 & (0.3,0) & 0.183 & 0.135
\end{tabular}
\end{table}

For least-squares regression, under utilitarian malfare, the restricted hypothesis constraint of $\hat{\HC}_{i}$ is an ellipsoid (under egalitarian welfare, it is an \emph{intersection} of ellipsoids). We visualize a simple example in \cref{fig:contours}, with parameters and results described in \cref{table:contours}.

\ccnote{Math in 1 col submission}

Taking
$\bm{B} \doteq \left\{ \bm{\beta} \in \R^{2} \, \middle| \, \norm{\bm{\beta}}_{1} \leq 1 \right\}$
to be the unit $\ell_{1}$ ball, we 
sample $(\bm{x}, y)$ as \linebreak[3]$\bm{x} \sim \mathrm{Unif}([-1,1]^2)$, \linebreak[3]$y = \bm{x} \cdot \bm{\beta}_i + \mathrm{Unif}([-1,1])$, 
where each group has slightly different data generating parameters $\bm{\beta}_i$. 
In \cref{fig:contours}, taking $\delta=0.1$, we plot the $n=100$ values of $\bm{\beta}$ which
realize each
supremum
of \eqref{eq:lin-mcera} for some Rademacher sample $\vsigma_{k}$.
These points necessarily lie on either (the corner of) the $\ell_1$ constraint boundary of $\bm{B}$ or the restricted hypothesis constraint boundary of $\hat{\HC}_i$, illustrated by the concentric 
ellipses,
which represent constant upper-bounds of weighted utilitarian malfare over the whole dataset and are centered around $\hat{h}$.

Note that for the smallest group, 3, the fact that $\hat{h}$ must perform well on the other two groups under weighted utilitarian malfare shrinks $\hat{\HC}_3$ significantly. However, the generalization bound over the largest group, 1, is not significantly improved when taken over $\hat{\HC}_1$.
\ccnote{Punchline: for small sample sizes $\hat{\HC}_{i} = \HC^{*}_{i} = \HC$ (w.h.p.), after some point, $\HC^{*}_{i} \subset \HC$, and $\hat{\HC}_{i}$ follows.
Moreover, if $h^{*}$ is not on the boundary of $W$, and the class is uniformly convergent, for a sufficiently large sample size, the $W$ constraint does not matter. 
}
\lknote{future idea: Fix data for one or two groups, and see dynamics of sample size}

\section{EXPERIMENTS
}
\label{sec:exp}

We illustrate the utility of our results with some experiments. Our approach is to construct an example dataset where we can demonstrate a clear benefit (to minority groups) to pooled training, and then show how our refined generalization bounds are in fact sharper than standard Rademacher bounds. 
We do this by assuming that the individual distributions of the groups are similar enough that, for underrepresented minority groups, pooled training reduces generalization error.

Our experiments are based on a binary logistic regression task with $3$ groups.
Suppose the unit $\ell_{\infty}$ ball domain, i.e., $\X = [-1, 1]^{15}$, binary label space 
$\Y = \pm 1$, and parameter space 
\linebreak[3]$\bm{B} \doteq \left\{\bm{\beta} \in \mathbb{R}^{15} \, \middle| \,  \norm{ \bm{\beta} }_1 \leq 15 \right\}$.
For each group $i$, we generate samples $(\bm{x}, y)$ with $\bm{x} \sim \mathrm{Unif}(\mathcal{X})$,  \linebreak[3]$\Prob(y = 1) = \text{logistic}(\bm{x} \cdot \bm{\beta}_i + \xi)$, with noise \linebreak[2]$\xi \sim \mathcal{N}(0,0.1)$, for \linebreak[2]$\text{logistic}(u) = \frac{1}{1 + \exp(-u)}$.

We assume groupwise data generating parameters and a constant proportional composition of the full training sample as in \cref{tab:logistic-regression-data}. Notably, the data generating model for groups 1 and 3 are very similar, but there is always much more data available for group 1.

\begin{table}[b]
\caption{Data generating parameters for logistic regression experiments.}
\vspace{-8pt}
\begin{tabular}{l|ll}
        & Data proportion & True parameters \\
\hline
Group 1 & 75\%            & $\beta_i = 0.3$                \\
Group 2 & 20\%            & $\beta_i = 0.1$                  \\
Group 3 & 5\%             & $\beta_i = 0.2$     
\end{tabular}
\label{tab:logistic-regression-data}
\end{table}

\begin{figure}
\includegraphics[width=0.48\textwidth]{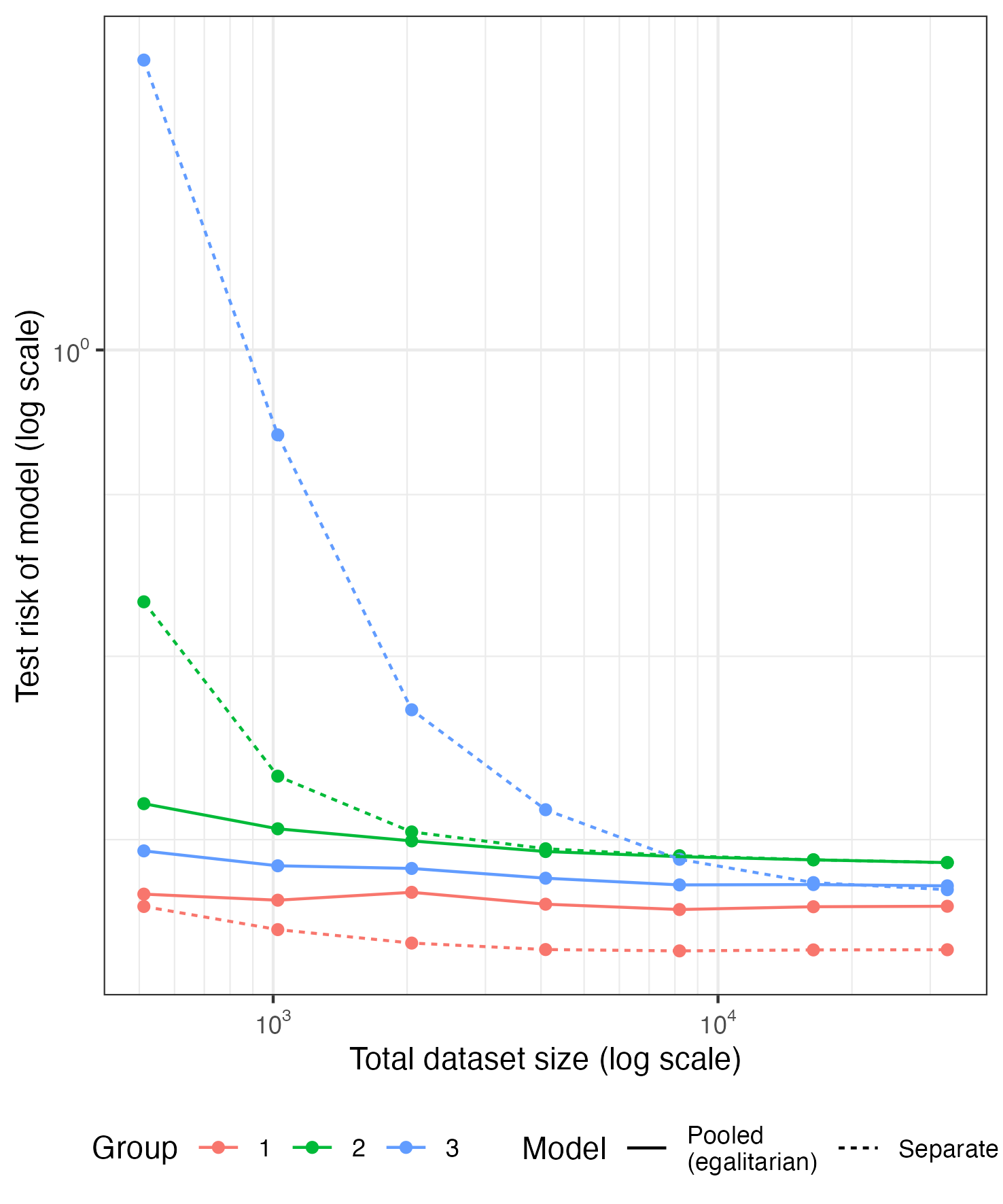}

\lknote{Made it longer. Shorter version also in figures folder (with the appendix "\_shorter") if it comes to that}
\vspace{-8pt}
    \caption{Average test risk of pooled and separately trained models on three groups 
    (see \cref{tab:logistic-regression-data}).
    }
    \label{fig:pooled-model-risk}
\end{figure}

\begin{figure*}
    \centering
\includegraphics[width=\textwidth]{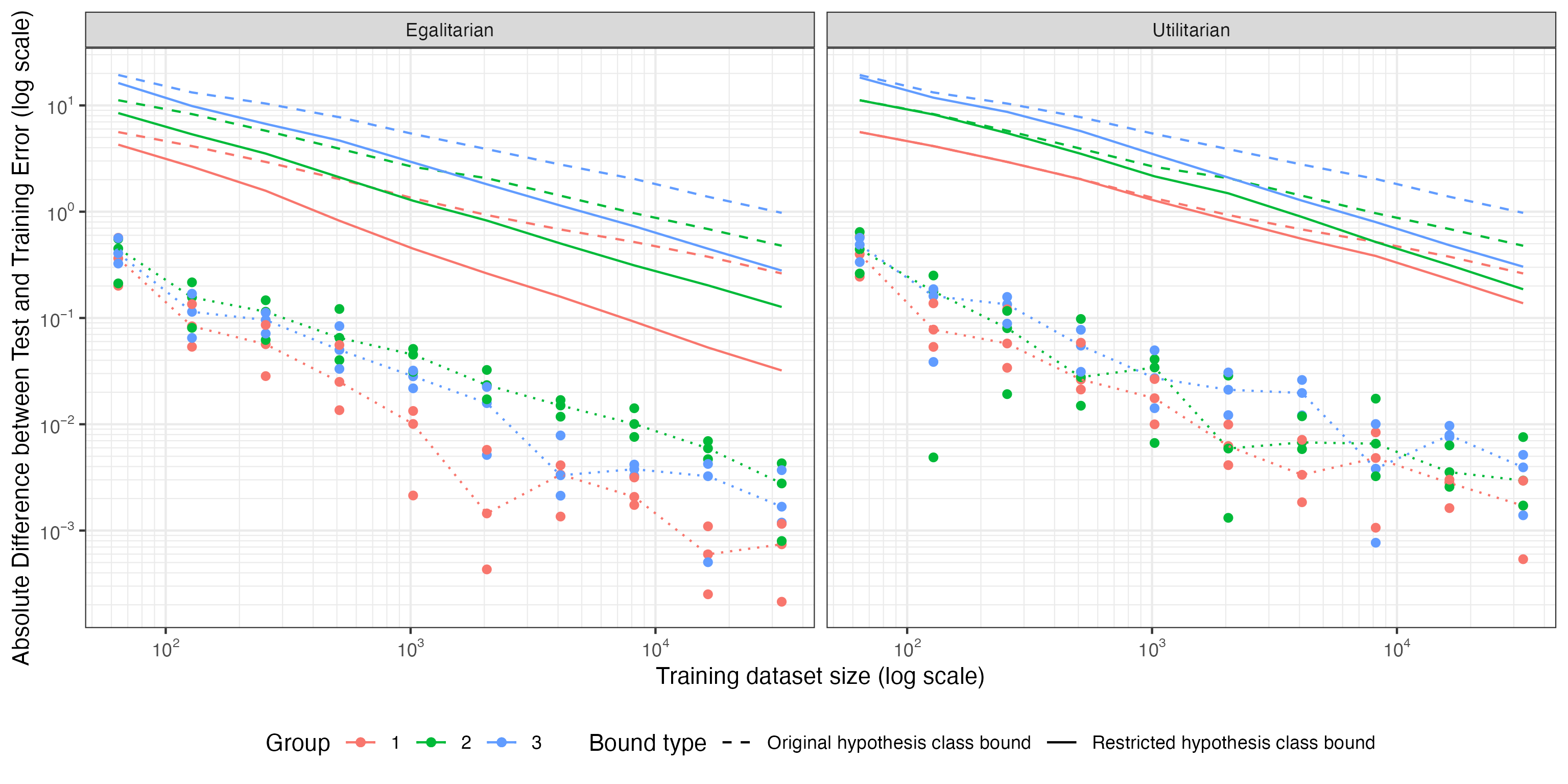}

    \caption{Generalization error bounds derived from original hypothesis class $\HC$ and restricted hypothesis classes $\hat{\HC}_i$, compared with 
    shared model $\hat{h}$
    train-test gap
    over 7 independent runs, with quartiles and 
    median trend lines.
    }
    \label{fig:bounds-improvement}
\end{figure*}

In \cref{fig:pooled-model-risk}, we plot the
average test risk  of each group $i$ over 7 independent runs for
malfare-minimizing models $\hat{h}$
or for risk-minimizing models $\hat{h}_{i}$ 
as a function of total training sample size, where
test risk is computed from a held-out test set 
with 20,000 samples for each group.
We observe that pooled models almost always have lower per-group test risks than the separately-trained models
$\hat{h}_{i}$
on the minority groups (2 and 3),
which we attribute to
the regularizing effect of pooled training overcoming the small discrepancies between the data generating parameters of each group (see~\cref{tab:logistic-regression-data}).
While the above describes small-sample behavior, for
sufficient sample sizes,
per-group models
should
dominate shared models, and we do observe this for group 3 with the maximum sample size of $32768 \cdot 0.05 \approx 1638$.

We then compute the bounds derived from Monte-Carlo Rademacher averages (with $\delta=0.1$) over samples $\bm{z}_i$ over both $\HC$ and $\hat{\HC}_i$ for each $i$ (\cref{fig:bounds-improvement}). 
Since the bounds derived from Rademacher averages over $\HC$ essentially function as bounds on the generalization error of the separately trained models, the fact that the bound over $\hat{\HC}_i$ is tighter correctly suggests that pooled training is better for the minority groups in this scenario, especially when using egalitarian training.

In the utilitarian case, we see that initially $\hat{\HC}_i$ bounds match $\HC$ bounds, but for sufficiently large sample sizes, they diverge.
In the egalitarian case, $\hat{\HC}_i$ bounds are always better than $\HC$ bounds, and they appear to decay at an asymptotically greater rate (slope on the log-log plot), reaching an order of magnitude improvement in the case of the largest group (group 1).
This suggests that our bounds characterize generalization error substantially more sharply than the na\"ive method.

\ccerror{TODO: bounds diverge, order of magnitude, why egal is better?}

\ccnote{
It is known that the generalization error usually behaves as
\[
\LandauOmega\!\left(\!\sqrt{\frac{v}{m}}\right)
    \leq \Expect_{
        \bm{z}_{1:\NGroups}}\left[ \abs{\Risk(\hat{h}, \ProbDist_{i}) - \ERisk(\hat{h}, \bm{z}_{i})} \right]
    \leq \LandauO\!\left(\!\sqrt{\frac{C}{m}}\right) \enspace,
\]
where $v > 0$ is the \emph{maximum loss variance} over any \emph{risk-optimal} $h^{*}$, and $C < \infty$ is a constant that depends on $\HC$ and $\ProbDist$.
In particular, letting $V$ denote the \emph{maximum loss variance} over \emph{all of $\HC$}, $C \leq \lim_{m \to \infty} V + \bigl(\sqrt{m}\Rade_{m}(\loss \circ \HC, \ProbDist) \bigr)^{2}$, and obeys $C \in \LandauO ( V \cdot \ln^{2} \abs{\HC} )$ or $C \in \LandauO ( V \cdot \VC^{2}(\HC) )$, i.e., the log-cardinality $\ln \abs{\HC}$ or VC dimension $\VC(\HC)$ (or similar quantities) measure the capacity of $\HC$ to overfit. 
\ccerror{Citations for fast learning rate theory}
\\
For sufficiently small sample sizes, we expect the restriction of our theory is vacuous, i.e., $\hat{\HC}_{i} = \HC$, and thus we see the \emph{slow learning rate} 
of $\frac{1}{\sqrt{m}}$ initially.
As the sample size $\bm{m}_{i}$ grows and constraints become non-vacuous, the simultaneous contraction of the hypothesis space $\hat{\HC}_{i}$ and growth of $\bm{m}_{i}$ can lead to a temporary \emph{fast convergence rate} of $\approx \frac{1}{m}$, though the above theory indicates that, so long as $v > 0$, this fast learning rate does not hold asymptotically; eventually the learning rate settles back to $\frac{1}{\sqrt{m}}$, albeit with error bounds a constant factor $\LandauTheta( \sqrt{\frac{v}{C}} )$ better than the na\"ive method.
\\
This hypothesis is borne out in \cref{fig:bounds-improvement}: under utilitarian welfare, bounds using $\hat{\HC}_{i}$ initially match those using $\HC$, until reaching roughly $\bm{m}_{1} = 1024$, $\bm{m}_{2} = 512$, and $\bm{m}_{3} = 256$ samples, after which the $\hat{\HC}_{i}$ bounds diverge from those using $\HC$.
Moreover, our log-log plot reveals a slope of roughly $-\frac{1}{2}$, i.e., a $\frac{1}{\sqrt{m}}$ power-law rate, for both the \emph{initial portion} of $\hat{\HC}_{i}$ bounds and $\HC$ bounds in their entirety. 
\lkerror{This feels like tea leaf reading to me}
In the egalitarian constraint experiments, we actually see that even at only $64$ samples, $\hat{\HC}_{i} \subset \HC$, and our bounds already
decay
more rapidly than the $\HC$ bounds.
}


\section{CONCLUSION}
\label{sec:conc}

We show that fair learning, like multitask learning, has a \emph{regularizing effect},
reducing overfitting to each group as compared to per-group models trained solely on 
their data. 
Concretely, we show that, from the perspective of each group, 
fair-learning (empirical malfare minimization) effectively occurs over some 
\emph{restricted hypothesis class}, and we the bound generalization error of each group's risk in terms of their Rademacher averages over these restricted classes.
This 
technique yields refined generalization bounds, not just for the overall learning, task, but also for the risk \emph{of each individual group}.

Such bounds are of particular importance in
learning settings where
minority groups often suffer poor model performance \citep{mehrabi2021survey}, such as medical ML \citep{obermeyer2019dissecting} and facial recognition \citep{buolamwini2018gender,cavazos2020accuracy}. %
Moreover,
in critical systems, having provable guarantees on the generalization error \emph{of each task}, rather than just the overall generalization error,
can greatly improve reliability and user trust.
This is also valuable in multi-task learning settings, where task fairness and task-specific bounds are of interest, e.g., in 
distributionally-robust LLMs \citep{oren2019distributionally}.

While the contributions of this paper are theoretical,
our setting is practically motivated.
Understanding the generalization error of each group allows modelers to make better-informed decisions, 
particularly regarding
minority groups. 
Generalization bounds for a 
group-specific model $\hat{h}_{i}$ and a shared model $\hat{h}$ can be used to bound risk for
group $i$,
which can be used for
\emph{model selection} (i.e., group $i$ can select between $\hat{h}$ and $\hat{h}_{i}$ with confidence).
It is known that, given infinite data, individual models are always preferable, and the degree of suboptimality of a shared model can be bounded using transfer learning techniques; however, for data-hungry models, in particular with sparse data for minority groups, a better understanding of the interplay between generalization error and the negative impacts of majority group data on minority group performance are vital. 

We also envision more sophisticated applications of our
bounds.
For example,
if some smaller groups are more similar to minority group $i$ than a majority group, 
a shared model $\hat{h}$ optimizing,
say, \emph{utilitarian malfare},
may perform poorly
for group $i$, 
but perhaps a 
better-performing $\hat{h}'$
would arise from
optimizing
a 
\emph{more egalitarian} malfare function (i.e., higher $p$ power-mean), or one that emphasizes similar groups (through the weights vector $\wv$).
Group-fair learning methods can be combined with other aspects of model selection, 
such as
feature and hyperparameter selection, where the bias-variance tradeoff plays a significant role.
Our bounds indicate that we can provably learn a more complex shared model without overfitting, and our analysis enables rigorous model selection guarantees
, both for individual group risks and for malfare objectives.
We are hopeful that future work explores these model-search questions and other applications of our methods.

\subsection*{Acknowledgments}
This research was made possible in part by the generous support of the Ford Foundation and the MacArthur Foundation.
Cyrus Cousins also wishes to acknowledge
the Center for Data Science at the University of Massachusetts Amherst, where part of this work was
conducted
under
a
postdoctoral fellowship. 
We would also like to thank Kweku Kwegyir-Aggrey for his insights and feedback on early versions of this work.

\clearpage

\bibliographystyle{apalike}
\bibliography{bibliography,bib2}

\newpage

\onecolumn
\appendix

\ccnote{New file for appendices?}

\section{Proofs}
\label{appx:sec:proofs}

\ccnote{Restatables}

We now show \cref{thm:bounds-theoretical}.
\thmboundstheoretical*
\begin{proof}
We begin by proving part 1 and then we prove part 2 as a consequence.

\smallskip

We now show part 1. 
Recall that $\etav_{i} \doteq 2\Rade_{\bm{m}_{i}}(\loss \circ \HC, \ProbDist_{i} ) + \epsv_{i}$. With probability at least $1 - 2\delta$, for all $h \in \HC$, it holds that
\[
\abs{ \Risk(h, \ProbDist_{i}) - \ERisk(h, \bm{z}_{i}) } \leq \etav_{i} \enspace.
\]
This is a textbook application of McDiarmid's bounded difference inequality, using twice the Rademacher average to bound the expected supremum deviation, i.e., the upper and lower tails of \eqref{eq:rade-textbook}.

We could use this directly to show a weaker version of the result, however to show the stated form, we need only one tail of the above, which is used to bound generalization error of the (unknown) $\hat{h}$, and also one tail of the simple Hoeffding’s inequality tail bound \eqref{eq:hoeff-bound}. 

Now, suppose some arbitrary but fixed $h'$ that realizes the infimum of \eqref{eq:theoretical-restricted-class}, i.e., 
\[
h' \in \argmin_{\!h' \in \HC\!} \Malfare\!\left( j \mapsto \begin{cases} j \! \neq \! i \!\! & \ERisk(h', \bm{z}_{j}) \\ j \! = \! i \!\! & \Risk(h', \ProbDist_{i}) + \epsv_{i} \end{cases} \right)
\]
(technically, $h'$ may be in $\HC$ or a limit of a sequence of functions in $\HC$
).
Recalling $\epsv_{i} \doteq \frange\sqrt{\frac{\ln \frac{1}{\delta}}{2\bm{m}_{i}}}$, we obtain by Hoeffding's inequality that, 
with probability at least $1-2\delta$, it holds
\[
\abs{ \Risk(h', \ProbDist_{i}) - \ERisk(h', \bm{z}_{i}) } \leq \epsv_{i} \enspace.
\]

Therefore, when these bounds hold, we have
\begin{align*}
\hspace{-1.5cm}
\Malfare\!\left( j \mapsto \begin{cases} j \! \neq \! i \!\! & \ERisk( \hat{h}, \bm{z}_{j}) \\ j \! = \! i \!\! & \Risk(\hat{h}, \ProbDist_{i}) \! - \! \etav_{i} \end{cases} 
    \right)
 \hspace{-4cm}
&\hspace{4cm}
\leq \Malfare\!\left( j \mapsto \ERisk( \hat{h}, \bm{z}_{j}) 
    \right) & \begin{tabular}{r} W.h.p.: $\Risk(\hat{h}, \ProbDist_{i}) - \etav_{i} \leq \ERisk(\hat{h}, \bm{z}_{i})$ \\ Monotonicity of $\Malfare(\cdot)$ \end{tabular} \\
 &=  \inf_{\!h' \in \HC\!} \Malfare\!\left( j \mapsto \ERisk(h', \bm{z}_{j}) 
    \right) & \begin{tabular}{r} By Definition \end{tabular} \\
 &\leq \inf_{\!h' \in \HC\!} \Malfare\!\left( j \mapsto \begin{cases} j \! \neq \! i \!\! & \ERisk(h', \bm{z}_{j}) \\ j \! = \! i \!\! & \Risk(h', \ProbDist_{i}) \! + \! \epsv_{i} \end{cases} 
    \right) \!\enspace. & \hspace{-0.5cm} \begin{tabular}{r} W.h.p.: $\ERisk(h', \bm{z}_{i}) \leq \Risk(h', \ProbDist_{i}) + \epsv_{i}$ \\ Monotonicity of $\Malfare(\cdot)$ \end{tabular}
\end{align*}
We may thus conclude with probability at least $1 - 4\delta$ that $\hat{h} \in \HC_{i}^{*}$ (by definition).
However, observe that both the McDiarmid (Rademacher) and Hoeffding bounds required only one tail each, and thus a more careful analysis yields the guarantee with probability at least $1 - 2\delta$.


\smallskip

We now show part 2.
By part 1, we have that $\hat{h} \in \HC_{i}^{*}$ with probability at least $1 - 2\delta$.
Then we apply the standard 2-tailed Rademacher bound with McDiarmid's inequality over the restricted class $\HC_{i}^{*}$, i.e., we have
\[
\Prob_{\bm{z}_{i} \distributed \ProbDist_{i}^{\bm{m}_{i}}} \left( \sup_{h \in \HC_{i}^{*}} 
\abs{ \Risk(h, \ProbDist_{i}) - \ERisk(h, \bm{z}_{i}) } \leq \etav_{i} \right) \leq 1 - 2\delta
\enspace.
\]
The union bound then yields the desideratum.
\end{proof}


We now show \cref{thm:bounds-empirical}.
\thmboundsempirical*
\begin{proof}
We begin by proving part 1, and we then show part 2 as a consequence.

\smallskip

We now show part 1.
First, we apply part 1 of \cref{thm:bounds-theoretical} (2 tails).
We will then 
argue that
\[
\Prob_{\bm{z}_{i} \distributed \ProbDist_{i}^{\bm{m}_{i}}}
    \left( \HC_{i}^{*} \subseteq \hat{\HC}_{i} \right) \geq 1 - 2\delta
\enspace,
\]
which holds for similar reasons (a 1-tail Rademacher bound for $\HC$, and a 1-tail Hoeffding bound for $\hat{h}$, both the opposite tails bounded in  part 1 of \cref{thm:bounds-theoretical}). 
The result then follows via union bound.

In particular,
recall \eqref{eq:theoretical-restricted-class}
\begin{equation*}
\HC_{i}^{*} \doteq \left\{ h \in \HC \,\middle|\, \Malfare\!\left( j \mapsto \begin{cases} j \! \neq \! i \!\! & \ERisk(h, \bm{z}_{j}) \\ j \! = \! i \!\! & \Risk(h, \ProbDist_{i}) - \etav_{i}^{} \end{cases} 
	 \right) \leq \inf_{\!h' \in \HC\!} \Malfare\!\left( j \mapsto \begin{cases} j \! \neq \! i \!\! & \ERisk(h', \bm{z}_{j}) \\ j \! = \! i \!\! & \Risk(h', \ProbDist_{i}) + \epsv_{i} \end{cases} 
	 \right) \! \right\} \enspace,
\end{equation*}
and also \eqref{eq:empirical-restricted-class}
\begin{equation*}
\hat{\HC}_{i} \doteq \left\{ h \in \hat{\HC} \,\middle|\, \Malfare\!\left( j \mapsto \begin{cases} j \! \neq \! i \!\! & \ERisk(h, \bm{z}_{j}) \\ j \! = \! i \!\! & \ERisk(h, \bm{z}_{i}) - 2\hat{\etav}_{i} \end{cases} 
    \right) \leq \inf_{\!h' \in \HC\!} \Malfare\!\left( j \mapsto \begin{cases} j \! \neq \! i \!\! & \ERisk(h', \bm{z}_{j}) \\ j \! = \! i \!\! & \ERisk(h', \bm{z}_{i}) + 2\epsv_{i} \end{cases} 
    \right) \! \right\}
    \enspace.
\end{equation*}

Now, observe that by McDiarmid's inequality, by essentially the same argument as in \eqref{eq:rade-textbook}, it holds that
{\small
\[
\! \Prob_{\bm{z}_{i} \hspace{-0.1em} \distributed \hspace{-0.1em} \ProbDist_{i}^{ \hspace{-0.1em} \bm{m}_{i\!}}
    } \!\! \left(
     \sup_{\mathclap{h \in \HC}} \ERisk(h, \bm{z}_{i}) \! - \! \Risk(h, \ProbDist_{i}) 
     + \etav_{i} 
     > 2\hat{\etav}_{i}
  \!\hspace{-0.1em}
  \right)
  = \!\!\!\!\! \Prob_{\bm{z}_{i} \hspace{-0.1em} \distributed \hspace{-0.1em} \ProbDist_{i}^{ \hspace{-0.1em} \bm{m}_{i\!}}
    }\!\! \left(
     \sup_{\mathclap{h \in \HC}} \ERisk(h, \bm{z}_{i}) \! - \! \Risk(h, \ProbDist_{i})
     + 2\Rade_{\bm{m}_{i}\!}(\loss \circ \HC, \ProbDist_{i}) 
     >
     4\ERade_{\bm{m}_{i}\!}(\loss \circ \HC, \bm{z}_{i}) \! + \! 3\epsv_{i}
  \!\hspace{-0.1em}
  \right)
  < \delta
  \hspace{-0.1em}\enspace. 
\]%
}%
%
We thus have that,
with probability at least $1 - \delta$,
for all $h \in \HC_{i}^{*}$, 
\[
\Malfare\!\left( j \mapsto \begin{cases} j \! \neq \! i \!\! & \ERisk(h, \bm{z}_{j}) \\ j \! = \! i \!\! & \ERisk(h, \bm{z}_{i}) - 2\hat{\etav}_{i} \end{cases} 
    \right) \leq \Malfare\!\left( j \mapsto \begin{cases} j \! \neq \! i \!\! & \ERisk(h, \bm{z}_{j}) \\ j \! = \! i \!\! & \Risk(h, \ProbDist_{i}) - \etav_{i} \end{cases} 
    \right) \enspace,
\]
and similarly,
with probability at least $1 - \delta$
by the Hoeffding bound \eqref{eq:hoeff-bound} on $h'$,
we have
\[
\inf_{\!h' \in \HC\!} \Malfare\!\left( j \mapsto \begin{cases} j \! \neq \! i \!\! & \ERisk(h', \bm{z}_{j}) \\ j \! = \! i \!\! & \Risk(h', \ProbDist_{i}) + \epsv_{i} \end{cases} 
    \right)
\leq 
\inf_{\!h' \in \HC\!} \Malfare\!\left( j \mapsto \begin{cases} j \! \neq \! i \!\! & \ERisk(h', \bm{z}_{j}) \\ j \! = \! i \!\! & \ERisk(h', \bm{z}_{i}) + 2\epsv_{i} \end{cases} 
    \right) \enspace,
\]
where both steps apply monotonicity of $\Malfare(\cdot)$.

From this, we may conclude that, with probability at least $1 - 2\delta$, for each $h \in \HC$, if  the constraint in \eqref{eq:theoretical-restricted-class} is satisfied, then the constraint \eqref{eq:empirical-restricted-class} is satisfied, thus ${\HC}_{i}^{*} \subseteq \hat{\HC}_{i}$.
The union bound over all tail bounds above then yields part 1.

\smallskip

We now show part 2.
This result essentially follows the structure of part 2 of \cref{thm:bounds-theoretical}.
However, we now start with part 1 above, which allows us to conclude that $\hat{h} \in \hat{\HC}_{i}$ with probability at least $1 - 4\delta$, and then apply the standard empirical Rademacher bounds, i.e., 2 tails of \eqref{eq:rade-textbook} (we require only the upper and lower bounds to the supremum deviation, not the bound on the Rademacher average itself), to $ \hat{\HC}_{i}$ (rather than to $ {\HC}_{i}^{*}$). 
Taking the union bound over all events then yields the desideratum.
%
%
\end{proof}

We now show \cref{coro:emp-malfare-bounds}.
\coroempmalfarebounds*
\begin{proof}

For both
results, we apply part 2 of \cref{thm:bounds-empirical} to each group $i$, which by union bound gives a result with probability at least $1 - 6\NGroups\delta$.
However, careful accounting reveals that we only require one tail of the final Rademacher bound of \cref{thm:bounds-empirical}~part~2, i.e., we require $\Risk(\hat{h}, \ProbDist_{j}) \leq \ERisk(\hat{h}, \bm{z}_{j}) + 2\ERade_{\bm{m}_{j}}(\hat{\HC}_{j}, \bm{z}_{j}) + 2\epsv_{j}$, \emph{but not} $\ERisk(\hat{h}, \bm{z}_{j}) \leq \Risk(\hat{h}, \ProbDist_{j}) + 2\ERade_{\bm{m}_{j}}(\hat{\HC}_{j}, \bm{z}_{j}) + 2\epsv_{j}$, thus 
we begin with tail bounds that hold with probability at least $1 - 5\NGroups\delta$.

We now show part 1. 
Subject to all tail bounds holding, we have
for all $j \in 1,\dots,\NGroups$ that $\Risk(\hat{h}, \ProbDist_{j}) \leq \ERisk(\hat{h}, \bm{z}_{j}) + 2\ERade_{\bm{m}_{j}}(\hat{\HC}_{j}, \bm{z}_{j}) + 2\epsv_{j}$, thus by monotonicity of $\Malfare(\cdot)$, we have
\[
\Malfare\left(j \mapsto \Risk(\hat{h}, \ProbDist_{j}) 
    \right)
\leq \Malfare\left(j \mapsto \ERisk(\hat{h}, \bm{z}_{j}) + 2\ERade_{\bm{m}_{j}}(\hat{\HC}_{j}, \bm{z}_{j}) + 2\epsv_{j} 
    \right) 
\enspace.
\]
Applying the Lipschitz property then yields the final portion of part 1. 

\smallskip

We now show part 2. 
First, observe that 
$
\Malfare\left(j \mapsto \Risk(h^{*}, \ProbDist_{j}) 
    \right)
\leq \Malfare\left(j \mapsto \Risk(\hat{h}, \ProbDist_{j}) 
    \right)
$ 
by definition.
For the remaining inequality, we introduce one new tail bound for each group $j$, in particular, a 1-tail Hoeffding bound of 
\[
\Prob_{\bm{z}_{j} \distributed \ProbDist_{j}^{\bm{m}_{j}}}\left(\Risk(h^{*}, \ProbDist_{j}) \leq \ERisk(h^{*}, \bm{z}_{j}) + \epsv_{j} \right) \geq 1 - \delta
\enspace.
\]
This seems familiar, but it is not quite the same as the 2-tail Hoeffding bound on each $\Risk(h', \ProbDist_{j})$ used by
\cref{thm:bounds-theoretical,thm:bounds-empirical}, thus this tail bound must be counted separately.
Now, we substitute into the result of part 1, again applying monotonicity, to get
\[
\Malfare\left(j \mapsto \Risk(\hat{h}, \ProbDist_{j}) 
    \right)
\leq \Malfare\left(j \mapsto \Risk(h^{*}, \ProbDist_{j}) + 2\ERade_{\bm{m}_{j}}(\hat{\HC}_{j}, \bm{z}_{j}) + 3\epsv_{j} 
    \right)
\enspace.
\]
Applying the Lipschitz property then yields the final portion of part 2. 
By union bound, we may conclude the result with probability at least $1 - 6\NGroups\delta$.
\end{proof}

We now show \cref{lemma:cvx}.
\lemmacvx*
\begin{proof}
We first show that the restricted parameter spaces of ${\HC}^{*}_{i}$ and $\hat{\HC}_{i}$ are convex sets.

The crux of
this result
is to show
that $\Malfare\bigl(j \mapsto f_{j}(\bm{\beta})\bigr)$ is quasiconvex, where $f_{j}(x)$ represents $\ERisk(h_{\bm{\beta}}, \bm{z}_{j}) - \bm{c}_{j}$ or $\Risk(h_{\bm{\beta}}, \ProbDist_{j}) - \bm{c}_{j}$ for some constant $\bm{c} \in \R^{\NGroups}$.
This indeed holds, so long as $f_{j}(\bm{\beta})$ is quasiconvex.
First note that convexity of 
\emph{loss} immediately implies convexity of (empirical) \emph{risk}.
Now, by standard compositional rules, since we assume $\Malfare(\cdot)$ to be quasiconvex and monotonic, we conclude that $\Malfare(j \mapsto f_{j}(\bm{\beta}))$ is quasiconvex in $\bm{\beta} \in \bm{B}$.

Now, converting $\HC$ to $\bm{B}$, observe that the parameter spaces associated with both $\HC_{i}^{*}$ in \eqref{eq:theoretical-restricted-class}
\begin{equation*}
\left\{
    \bm{\beta} \in \bm{B}
\,\middle|\, \Malfare\!\left( j \mapsto \begin{cases} j \! \neq \! i \!\! & \ERisk(h_{\bm{\beta}}, \bm{z}_{j}) \\ j \! = \! i \!\! & \Risk(h_{\bm{\beta}}, \ProbDist_{i}) - \etav_{i}^{} \end{cases} 
	 \right) \leq \inf_{\!h' \in \HC\!} \Malfare\!\left( j \mapsto \begin{cases} j \! \neq \! i \!\! & \ERisk(h', \bm{z}_{j}) \\ j \! = \! i \!\! & \Risk(h', \ProbDist_{i}) + \epsv_{i} \end{cases} 
	 \right) \! \right\}
  \enspace,
\end{equation*}
and also $\hat{\HC}_{i}$ in \eqref{eq:empirical-restricted-class}
\begin{equation*}
\left\{
    \bm{\beta} \in \bm{B} 
\,\middle|\, \Malfare\!\left( j \mapsto \begin{cases} j \! \neq \! i \!\! & \ERisk(h_{\bm{\beta}}, \bm{z}_{j}) \\ j \! = \! i \!\! & \ERisk(h_{\bm{\beta}}, \bm{z}_{i}) - 2\hat{\etav}_{i} \end{cases} 
    \right) \leq \inf_{\!h' \in \HC\!} \Malfare\!\left( j \mapsto \begin{cases} j \! \neq \! i \!\! & \ERisk(h', \bm{z}_{j}) \\ j \! = \! i \!\! & \ERisk(h', \bm{z}_{i}) + 2\epsv_{i} \end{cases} 
    \right) \! \right\}
    \enspace,
\end{equation*}
are subsets of the convex set $\bm{B}$.
In particular, the RHS of the condition is \emph{constant} in $\bm{\beta}$, and as above, the LHS is quasiconvex in $\bm{\beta}$, thus both restricted parameter spaces are convex sets.


Now,
note that
once we determine the parameter space to be convex, Monte-Carlo Rademacher averages can be efficiently computed via standard convex optimization techniques, e.g., first-order methods to maximize a linear objective on a convex set.
Key to this observation is that we assumed $g \circ \HC$ to be an affine function family, thus even after multiplying terms by $\pm 1$ in the Monte-Carlo Rademacher average \eqref{eq:lin-mcera}, the objective of the supremum remains convex.

Finally, observe that if $\Malfare(\cdot)$ is convex and monotonically increasing, then the EMM objective is also convex. This follows from standard compositional rules, see discussion following \citet{boyd2004convex} equation~(3.15).  EMM then reduces to minimizing a convex function on a convex set.
\end{proof}

\section{Implementation details}

All code used to generate the results in this paper are available upon request.
The computation of 
each supremum in \eqref{eq:lin-mcera}, i.e., $\ERade{}^n_{m}(g \circ \HC, \bm{z}_{i}; \vsigma)$ 
and $\ERade{}^n_{m}(g \circ \hat{\HC}_{i}, \bm{z}_{i}; \vsigma)$
optimize linear functions of $\beta$.
However, since the restricted hypothesis constraints are convex functions (see~\cref{lemma:cvx}) over the 
parameter space, we need to use solvers that can handle 
nonlinear convex constraints.
For this reason, we use either the ECOS \citep{Domahidi2013ecos} or SCS \citep{scs} algorithms available in CVXPY \citep{diamond2016cvxpy, agrawal2018rewriting}. These algorithms are also able to compute the upper bound of the restricted hypothesis class constraint itself, which minimizes an objective  which is 
dependent on the loss $\ell$. 

\end{document}